\documentclass{article}

\usepackage{microtype}
\usepackage{graphicx}
\usepackage{subcaption}
\usepackage{booktabs} 
\usepackage{enumitem}
\usepackage{mathtools}
\usepackage{amssymb}
\usepackage{amsthm}
\usepackage[x11names]{xcolor}

\usepackage[most]{tcolorbox}
\tcbuselibrary{skins, theorems}

\newtcolorbox{corollarybox}{
  enhanced,
  colback=blue!5!white,
  colframe=blue!75!black,
  boxrule=1.5pt,
  arc=3pt,                       
  left=8pt, right=8pt, top=8pt, bottom=8pt,
  fonttitle=\bfseries,
  coltitle=white,
  colbacktitle=blue!75!black,    
  title={Corollary [V-only Path Bound]}
}

\newtcolorbox{theorembox}{
  enhanced,
  colback=orange!5!white,
  colframe=orange!75!black,
  boxrule=1.5pt,
  arc=3pt,
  left=8pt, right=8pt, top=8pt, bottom=8pt,
  fonttitle=\bfseries,
  coltitle=white,
  colbacktitle=orange!75!black,
  title={Theoretical Result}
}

\usepackage[preprint]{icml2026}

\theoremstyle{plain}
\newtheorem{theorem}{Theorem}[section]      
\newtheorem{lemma}{Lemma}[section]             
\newtheorem{corollary}{Corollary}[section]     

\theoremstyle{definition}
\newtheorem{definition}{Definition}[section]   

\theoremstyle{remark}
\newtheorem{remark}{Remark}[section]           

\usepackage[textsize=tiny]{todonotes}
\usepackage{hyperref}
\usepackage[capitalize,noabbrev]{cleveref} 

\RequirePackage{silence}
\icmltitlerunning{CRL-VLA: Continual Vision–Language–Action Learning}

\begin{document}

\twocolumn[
  \icmltitle{CRL-VLA: Continual Vision--Language--Action Learning}

  \icmlsetsymbol{equal}{$\dagger$}

  \begin{icmlauthorlist}
    \icmlauthor{Qixin Zeng}{southampton}
    \icmlauthor{Shuo Zhang}{westlake}
    \icmlauthor{Hongyin Zhang}{westlake}
    \icmlauthor{Renjie Wang}{westlake}
    \icmlauthor{Han Zhao}{westlake}
    \icmlauthor{Libang Zhao}{westlake}
    \icmlauthor{Runze Li}{westlake}
    \icmlauthor{Donglin Wang}{westlake,equal}
    \icmlauthor{Chao Huang}{southampton,equal}
  \end{icmlauthorlist}

  \icmlaffiliation{southampton}{University of Southampton, UK}
  \icmlaffiliation{westlake}{Westlake University, China}

  \icmlcorrespondingauthor{Donglin Wang}{wangdonglin@westlake.edu.cn} 
  \icmlcorrespondingauthor{Chao Huang}{Chao.Huang@soton.ac.uk}

  \icmlkeywords{Continual Learning, VLA, Robotics}

  \vskip 0.3in
]

\printAffiliationsAndNotice{\icmlEqualContribution}

\begin{abstract}
Lifelong learning is critical for embodied agents in open-world environments, 
where reinforcement learning fine-tuning has emerged as an important paradigm to enable 
Vision-Language-Action (VLA) models to master dexterous manipulation through environmental interaction. 
Thus, Continual Reinforcement Learning (CRL) is a promising pathway for deploying VLA models in lifelong robotic scenarios, 
yet balancing stability (retaining old skills) and plasticity (learning new ones) remains a formidable challenge for existing methods.
We introduce CRL-VLA, a framework for continual post-training of VLA models with rigorous theoretical bounds.
We derive a unified performance bound linking the stability-plasticity trade-off to 
goal-conditioned advantage magnitude, scaled by policy divergence.
CRL-VLA resolves this dilemma via asymmetric regulation: 
constraining advantage magnitudes on prior tasks while enabling controlled growth on new tasks.
This is realized through a simple but effective dual-critic architecture with novel Goal-Conditioned Value Formulation (GCVF), where a frozen critic anchors semantic consistency and a trainable  estimator drives adaptation. Experiments on the LIBERO benchmark demonstrate that CRL-VLA effectively harmonizes these conflicting objectives, outperforming baselines in both anti-forgetting and forward adaptation.
\end{abstract}

\section{Introduction}
Reinforcement Learning (RL) post-training has become a central paradigm for aligning Vision–Language–Action (VLA) models with complex embodied reasoning and robotic manipulation tasks~\citep{black2024pi06, kim2025fine}. 
By optimizing policies through interaction, RL post-training enables adaptive, goal-directed behaviors beyond what supervised learning can provide. Sustaining such adaptability requires lifelong learning across non-stationary task streams. 
However, in large-scale settings, Continual Reinforcement Learning (CRL) aims to equip agents with the ability to acquire new skills over time without catastrophic forgetting~\citep{sutton2019bitter}. 
This problem fundamentally requires balancing scalability, stability, and plasticity under non-stationary task distributions~\citep{pan2025survey}. 
Despite its importance, CRL for VLA models remains underexplored, particularly in robotic manipulation scenarios involving multi-modal observations and long-horizon dynamics~\citep{Xu2018Reinforced, pan2025survey}.

Most existing CRL methods enforce stability via parameter- or function-space regularization, but they face severe limitations when applied to modern VLA architectures. Experience replay~\citep{Rolnick2018Experience, Abbes2025Revisiting} reuses outdated transitions whose gradients often conflict with those of new tasks, inducing off-policy errors that destabilize continual learning. 
Second-order regularization methods~\citep{kutalev2021stabilizing} are computationally prohibitive due to the scale and dimensionality of multi-modal Transformers~\citep{kim2025fine, Li2024Robust}. 
Recent approaches favor lightweight constraints such as KL regularization~\citep{shenfeld2025rl}, but such transition-level constraints enforce only short-horizon behavioral similarity and fail to preserve long-term task performance~\citep{Korbak2022RL}.
Backbone-freezing strategies~\citep{hancock2025actions} hinge on the fragile assumption that pretrained multi-modal representations remain aligned with language goals; under task shifts, value learning drifts without explicit language conditioning, degrading adaptation and collapsing plasticity~\citep{jiang2025multimodal, guo2025improving}.

In this work, we show that forgetting in continual VLA learning is fundamentally driven by the goal-conditioned advantage magnitude, which directly links policy divergence to performance degradation on prior tasks. This perspective reframes the stability–plasticity dilemma as an asymmetric regulation problem: 
\textit{suppressing advantage magnitudes on previous tasks to ensure stability, while permitting controlled growth on new tasks to enable plasticity.}

Based on this insight, we propose \textbf{CRL-VLA}, a continual learning framework that disentangles stable value semantics from adaptive policy learning via a novel \textbf{Goal-Conditioned Value Formulation (GCVF)}. 
Our method employs a dual-critic architecture in which a frozen critic anchors long-horizon, language-conditioned value semantics, while a trainable critic estimator drives adaptation. Stability is further enforced by combining trajectory-level value consistency with transition-level KL regularization, while Monte Carlo (MC) estimation and standard RL objectives support efficient learning on new tasks without catastrophic forgetting. Our contributions are summarized as follows:
\begin{itemize}
\item We propose \textbf{CRL-VLA}, a principled continual post-training framework for VLA models that identifies the goal-conditioned advantage magnitude as the key quantity governing the stability–plasticity trade-off.

\item CRL-VLA operationalizes this insight via a dual-critic architecture with a novel goal-conditioned value formulation, enabling asymmetric regulation that preserves prior skills while efficiently adapting to new tasks.

\item Extensive experiments show that CRL-VLA consistently outperforms strong baselines on continual VLA benchmarks, achieving superior knowledge transfer and resistance to catastrophic forgetting.
\end{itemize}

\section{Related work}

\subsection{Reinforcement Learning for VLA Models}
VLA models have gained interest as unified robotic manipulation policies, with recent work exploring online RL to address SFT limitations and improve generalization and long-horizon reasoning.
SimpleVLA-RL~\citep{li2025simplevla} builds upon OpenVLA-OFT and GRPO, showing that
online RL substantially improves long-horizon planning under demonstration scarcity. RIPT-VLA~\citep{tan2025interactiveposttrainingvisionlanguageactionmodels} applies the REINFORCE
leave-one-out estimator to QueST and OpenVLA-OFT architectures.
ARFM~\citep{zhang2025balancing} enhances VLA action models via adaptive offline RL, balancing signal and variance to improve generalization and robustness.
RLinf-VLA~\citep{zang2025rlinf} implements a novel hybrid fine-grained pipeline allocation mode scalling up VLA model training with diverse RL algorithms and benchmarks.
Recent work introduces IRL-VLA, a closed-loop RL framework for VLA autonomous driving that achieved award-winning performance in benchmarks~\citep{jiang2025irl} .
In contrast, our work aims to investigate the knowledge transfer of VLA models and their ability to learn new tasks stably and continuously.

\subsection{Continual Reinforcement Learning}
Continual reinforcement learning generally falls into three paradigms. 
Regularization-based methods constrain updates to important parameters using Fisher information~~\citep{DBLP:journals/corr/abs-2105-04093} or path integrals~~\citep{zenke2017continual}, or enforce behavioral consistency via KL divergence~~\citep{shenfeld2025rl}. 
Replay-based strategies~~\citep{Rolnick2018Experience} mitigate forgetting by interleaving past experience, though recent benchmarks~~\citep{wolczyk2021continualworld} highlight their scalability limits in high-dimensional robotics. 
Architecture-based approaches~~\citep{Xu2018Reinforced} prevent interference by expanding network capacity, a strategy often impractical for large-scale pretrained models. 
While limited works explore value preservation via distillation~~\citep{schwarz2018progress,rusu2016policy}, they typically neglect the value drift inherent in non-stationary, language-conditioned tasks. 
Unlike these approaches, our work focuses on the continuous learning process of VLA models for general visual language manipulation tasks.

\subsection{Continual learning in VLA model}
To date, there are no studies on continual learning specifically for VLA models, but similar research methods exist. Recent work~\citep{yadav2025robust} improves policy stability under task variations, providing useful insights for subsequent lifelong learning approaches, although it is evaluated in a non-continual setting. Recently, several works have extended these paradigms to the VLA domain. VLM2VLA~~\citep{hancock2025actions} treats actions as a special language tokens and employs LoRA to align pre-trained VLMs with low-level robot control. Stellar-VLA~~\citep{wu2025continually} introduces a skill-centric knowledge space to evolve task representations continually across diverse robot manipulation scenarios. DMPEL~~\citep{lei2025dynamic} proposes a dynamic mixture of progressive parameter-efficient experts to achieve lifelong learning without high storage overhead. ChatVLA~~\citep{zhou2025chatvla} addresses spurious forgetting by decoupling multi-modal understanding from high-frequency action execution. 
Unlike these work, we aim to investigate a continuous post-training framework for VLA models and explicitly model the stability-plasticity trade-off between new and old tasks.

\section{Preliminaries}
We study continual post-training of VLA policies in a goal-conditioned reinforcement learning setting. Each task is formulated as a goal-conditioned MDP $\mathcal{M}=\langle \mathcal{S},\mathcal{A},\mathcal{G},P,r,\gamma\rangle$, where $s\in\mathcal{S}$ is the state, $g\in\mathcal{G}$ is a language instruction, and $a\in\mathcal{A}$ is a robotic control command. 
For a given policy $\pi$ and goal $g$, we define the state-value function as $V^\pi(s,g) = \mathbb{E}_{\tau\sim\pi,g}[\sum_{t=0}^{\infty}\gamma^t r(s_t,a_t,g) | s_0=s]$, which represents the expected cumulative return starting from state $s$. The action-value function is defined as $Q^\pi(s,a,g) = \mathbb{E}[\sum_{t=0}^{\infty}\gamma^t r(s_t,a_t,g) | s_0=s, a_0=a]$. The advantage function $A^\pi(s,a,g) = Q^\pi(s,a,g) - V^\pi(s,g)$ measures how much better an action is compared to the average action at that state. 

The state occupancy measure $d_g^\pi(s) = (1-\gamma)\mathbb{E}[\sum_{t=0}^{\infty}\gamma^t \mathbf{1}(s_t=s)]$ describes the distribution of states visited by policy $\pi$ when executing goal $g$. Different policies induce different state visitation distributions, which is fundamental to understanding how policy updates affect task performance.
The goal-conditioned policy is $\pi(a\mid s,g)$, with expected return for goal $g$ given by $J_g(\pi)=\mathbb{E}_{\tau\sim\pi,g}\Big[\sum_{t=0}^{\infty}\gamma^t r(s_t,a_t,g)\Big]$.
Performance on old tasks under distribution $p_{\mathrm{old}}(g)$ is $J_{\mathrm{old}}(\pi)=\mathbb{E}_{g\sim p_{\mathrm{old}}}[J_g(\pi)]$. New policy's performance $\pi'$ on new tasks under distribution $p_{\mathrm{new}}(g)$ is $J_{\mathrm{new}}(\pi')=\mathbb{E}_{g\sim p_{\mathrm{new}}}[J_g(\pi')]$. 

A pretrained VLA policy $\pi_{\theta_0}$ is adapted sequentially to a task stream $\mathcal{T}=\{\mathcal{T}_1,\ldots,\mathcal{T}_K\}$. At stage $k$, the agent interacts only with $\mathcal{T}_k$ and updates to $\pi^\theta_k$ via on-policy RL, without access to interaction data, rewards, or gradients from previous tasks $\{\mathcal{T}_i\}_{i<k}$. For a trajectory $\tau = (s_0, a_0, \dots, s_T)$ with goal $g$, the MC return at time $t$ is $G_t= \sum_{k=t}^{T} \gamma^{k-t} r(s_k, a_k, g),$
which provides an unbiased estimate of $V^\pi(s_t, g)$ for critic training.
We evaluate using a transfer matrix $\mathbf{R}\in\mathbb{R}^{K\times K}$, where $R_{k,i}$ is the success rate on task $\mathcal{T}_i$ after training through stage $k$. The objective is to learn a sequence of policies that jointly satisfy three criteria: Plasticity through high performance on the current task, Stability through preserved performance on prior tasks, and Scalability through efficient learning without large buffers or capacity expansion.

\section{Methodology}
In this section, we first characterize the stability--plasticity dilemma in continual VLA through the advantage magnitude and policy divergence, and then demonstrate that $M_{\mathrm{old}}$ and $M_{\mathrm{new}}$ can be controlled in a decoupled manner. 
Furthermore, we introduce a dual goal-conditioned critic for continual VLA and then present the corresponding objectives, regularization, and training recipe.
\subsection{The Stability--Plasticity Dilemma in Continual VLA}
At stage $k$ of continual VLA post-training, we update the policy $\pi^k$ with two coupled criteria. \textbf{Plasticity} means the maximization of new-goal return $J_{g_k}(\pi^k; \mathcal{T}_k)$, while \textbf{stability} is the bounded degradation of old-goal return under the same update. Therefore, we solve
\begin{gather}
\label{objective_1}
\max_{\pi^k}\; J_{g_k}(\pi^k; \mathcal{T}_k) \\
\text{s.t.}\quad J_{g_{k-1}}(\pi^k; \mathcal{T}_{k-1}) \ge J_{g_{k-1}}(\pi^{k-1}; \mathcal{T}_{k-1}) - \delta .
\end{gather}
Here, each task $T_k$ in the sequence $T = \{T_1, \dots, T_K\}$ specifies a goal $g_k$, and the agent iteratively updates its policy $\pi^\theta_k$ via reinforcement learning under a bound $\delta$ on degradation over prior tasks. 
However, directly measuring the return impact of policy divergence during task transition is intractable. We therefore seek a bridge that links policy divergence to return change. 
In continual reinforcement learning for VLA model, we introduce the advantage magnitude $M_g$.
\begin{definition}
\label{def:advantage_magnitude_main}
For any policy $\pi^\kappa$ and goal $g$, we define the \textit{Advantage Magnitude}, denoted as $M_g(\pi^\kappa)$, as the maximum absolute advantage of the anchored policy $\pi$ (typically set as the previous policy $\pi^\mathrm{old}$) evaluated over the state-action pairs visited by $\pi^\kappa$:
\small{
\begin{equation}
    M_g(\pi^\kappa) \triangleq \sup_{(s,a) \in \mathrm{supp}(d_g^{\pi^\kappa})} \big| A^{\pi}(s, a) \big|.
\end{equation}
}
\end{definition}

Here, $M_g(\pi^\kappa)$ close to 0 indicates that the policy aligns closely with the anchored policy on the evaluated distribution.
Based on Definition~\ref{def:advantage_magnitude_main}, we evaluate a post-update policy $\pi^{\mathrm{new}}$ under the old-goal distribution $p_{\mathrm{old}}$ of old task and the new-goal distribution $p_{\mathrm{new}}$ of new task. We then define two metrics to quantify the stability-plasticity trade-off. The \textbf{Stability Metric} $M_{\mathrm{old}}$ measures the maximum advantage of the old policy over its own state distribution, capturing performance retention on old tasks. The \textbf{Plasticity Metric} $M_{\mathrm{new}}$ measures the maximum advantage achievable on new tasks under the new policy:
\begin{equation}
\begin{aligned}
M_{{\mathrm{old}}} &\triangleq \mathbb{E}_{g_{\mathrm{old}} \sim p_{\mathrm{old}}} \left[ M_{g_{\mathrm{old}}}(\pi^{\mathrm{new}}) \right], \\
M_{{\mathrm{new}}} &\triangleq \mathbb{E}_{g_{\mathrm{new}} \sim p_{\mathrm{new}}} \left[ M_{g_{\mathrm{new}}}(\pi^{\mathrm{new}}) \right].
\end{aligned}
\end{equation}
\paragraph{A Unified Perspective on Stability and Plasticity.} 
Theorem~\ref{thm:unified_kl} establishes a unified perspective on the stability-plasticity trade-off in goal-conditioned continual learning. 
Both performance degradation on old tasks and performance improvement on new tasks are governed by the coupling 
mechanism of advantage magnitude and policy divergence measured by KL divergence.

This unified bound suggests that controlling continual learning reduces to balancing these two coupled quantities on old and new tasks. We next discuss how $M_{\mathrm{old}}$ and $D_{\mathrm{old}}$ govern stability, and how $M_{\mathrm{new}}$ and $D_{\mathrm{new}}$ govern plasticity.
\textbf{1)} Old-task stability.$M_{\mathrm{old}}$ characterizes the sensitivity of old-task returns to policy changes, while $D_{\mathrm{old}}$ quantifies the policy change magnitude under the old-task state distribution; the $(1-\gamma)^{-2}$ scaling factor amplifies this coupling in long-horizon settings. 
\textbf{2)} New-task plasticity. $M_{\mathrm{new}}$ reflects the policy improvement potential, while $D_{\mathrm{new}}$ measures the policy modification degree under the new-task state distribution. 
Plasticity requires large policy modifications ($D_{\mathrm{new}}$) combined with learning potential ($M_{\mathrm{new}}$); conversely, constraining $D_{\mathrm{new}}$ to preserve old-task performance limits achievable improvement on new tasks. 

However, the coupling challenge remains stability and plasticity are fundamentally coupled through policy divergence under different state distributions. Common continual learning methods impose global constraints on policy updates via trust-region or KL penalties, restricting policy changes across all state space regions~~\citep{kessler2022same}. 
Reducing $D_{\mathrm{old}}$ to tighten the old-task stability bound also reduces $D_{\mathrm{new}}$, thereby limiting plasticity on new tasks. Therefore, an effective trade-off requires minimizing $M_{\mathrm{old}}$ while constraining policy divergence under both old-task and new-task distributions with controllable $M_{\mathrm{new}}$.

\begin{theorembox}
\begin{theorem}[Unified Stability-Plasticity Bounds]
\label{thm:unified_kl}
Let $\pi^\mathrm{new}$ and $\pi^\mathrm{old}$ be the new and old policies. $J_{\mathrm{old}}(\pi)$ and $J_{\mathrm{new}}(\pi)$ denote the expected returns of policy $\pi$ on the old and new tasks, respectively.  The policy divergence parameters $D_{\mathrm{old}}$ and $D_{\mathrm{new}}$ directly use the expected KL divergence:
\[
D_{\mathrm{old}} \triangleq \sqrt{2\,\mathbb{E}_{s\sim d_{\mathrm{old}}^{\pi^{\mathrm{old}}}}\left[D_{\mathrm{KL}}\!\big(\pi^{\mathrm{new}}(\cdot|s)\,\|\,\pi^{\mathrm{old}}(\cdot|s)\big)\right]},
\]
\[
D_{\mathrm{new}} \triangleq \sqrt{2\,\mathbb{E}_{s\sim d_{\mathrm{new}}^{\pi^{\mathrm{new}}}}\left[D_{\mathrm{KL}}\!\big(\pi^{\mathrm{new}}(\cdot|s)\,\|\,\pi^{\mathrm{old}}(\cdot|s)\big)\right]}.
\]
The performance variations are bounded by:
\[
\big|J_{\mathrm{old}}(\pi^{\mathrm{new}})-J_{\mathrm{old}}(\pi^{\mathrm{old}})\big| \le \frac{2\gamma}{(1-\gamma)^2} \cdot M_{{\mathrm{old}}} \cdot D_{\mathrm{old}},
\]
\[
J_{\mathrm{new}}(\pi^{\mathrm{new}})-J_{\mathrm{new}}(\pi^{\mathrm{old}})\le\frac{1}{1-\gamma} \cdot M_{{\mathrm{new}}} \cdot D_{\mathrm{new}}.
\]
\end{theorem}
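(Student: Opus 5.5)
The plan is to derive both inequalities from the performance difference lemma together with Pinsker's inequality, following the template of the TRPO/CPO-style policy improvement bounds. Throughout, fix a goal $g$. The anchored policy in Definition~\ref{def:advantage_magnitude_main} is $\pi=\pi^{\mathrm{old}}$, so that $A=A^{\pi^{\mathrm{old}}}(\cdot,\cdot,g)$. We first prove per-goal inequalities and then take expectations over $g\sim p_{\mathrm{old}}$ or $g\sim p_{\mathrm{new}}$. In the final step we push the expectation inside products using Jensen, or more simply bound the per-goal $M_g$ and divergence terms by their averages when these are treated as uniform constants, as the statement implicitly does.

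\textbf{Step 1 (performance difference).} We use the identity
\[
J_g(\pi^{\mathrm{new}})-J_g(\pi^{\mathrm{old}})=\tfrac{1}{1-\gamma}\,\mathbb{E}_{s\sim d_g^{\pi^{\mathrm{new}}}}\,\mathbb{E}_{a\sim\pi^{\mathrm{new}}}\big[A(s,a,g)\big].
\]
Since $\mathbb{E}_{a\sim\pi^{\mathrm{old}}}[A(s,a,g)]=0$, the inner expectation equals $\sum_a(\pi^{\mathrm{new}}-\pi^{\mathrm{old}})(a|s)A(s,a,g)$. Its absolute value is at most $2\,M_g(\pi^{\mathrm{new}})\,D_{\mathrm{TV}}(s)$, where $D_{\mathrm{TV}}(s)$ is the total variation distance between $\pi^{\mathrm{new}}(\cdot|s)$ and $\pi^{\mathrm{old}}(\cdot|s)$. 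The sup in $M_g$ is taken over the support of $d_g^{\pi^{\mathrm{new}}}$, which is exactly where the integrand lives. Applying Pinsker, $D_{\mathrm{TV}}(s)\le\sqrt{\tfrac12 D_{\mathrm{KL}}(s)}$, and then Jensen (concavity of the square root) gives
\[
\mathbb{E}_s\!\left[2D_{\mathrm{TV}}(s)\right]\le\sqrt{2\,\mathbb{E}_s D_{\mathrm{KL}}(s)}.
\]
This is exactly the form of $D_{\mathrm{old}}$ and $D_{\mathrm{new}}$.

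\textbf{Step 2 (new-task bound).} For the plasticity inequality the state distribution in Step 1 is already $d^{\pi^{\mathrm{new}}}_{\mathrm{new}}$, matching the definition of $D_{\mathrm{new}}$. We therefore obtain the bound with factor $\frac{1}{1-\gamma}$ directly, dropping the absolute value since only an upper bound is claimed. Averaging over $g\sim p_{\mathrm{new}}$ finishes it.

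\textbf{Step 3 (old-task bound, the main obstacle).} Here $D_{\mathrm{old}}$ is measured under $d^{\pi^{\mathrm{old}}}_{\mathrm{old}}$, not under $d^{\pi^{\mathrm{new}}}$. We therefore change measure, following Kakade--Langford/TRPO: write $d^{\pi^{\mathrm{new}}}=d^{\pi^{\mathrm{old}}}+(d^{\pi^{\mathrm{new}}}-d^{\pi^{\mathrm{old}}})$. The first piece gives the surrogate term $\frac{1}{1-\gamma}M\,D_{\mathrm{old}}$. For the second piece we bound $\|d^{\pi^{\mathrm{new}}}-d^{\pi^{\mathrm{old}}}\|_1\le\frac{2\gamma}{1-\gamma}\,\mathbb{E}_{s\sim d^{\pi^{\mathrm{old}}}}[D_{\mathrm{TV}}(s)]$, using the resolvent identity $d^{\pi'}-d^{\pi}=\gamma(I-\gamma P_{\pi'})^{-1}(P_{\pi'}-P_\pi)d^\pi$, as in the CPO lemma. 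The inner advantage expectation is bounded in sup norm by $2M\,\max_s D_{\mathrm{TV}}(s)\le 2M$, or more carefully by $M$. Combining yields a $\frac{2\gamma}{(1-\gamma)^2}M_{\mathrm{old}}D_{\mathrm{old}}$ term.

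The delicate points are twofold. First, the sup defining $M_g$ must cover the states visited by both occupancies. We would note that $M_g$ is evaluated on $\mathrm{supp}(d^{\pi^{\mathrm{new}}})$ and, if needed, interpret it over the union of supports. Second, the surrogate term must be absorbed into the stated constant. Because the surrogate $\mathbb{E}_{d^{\pi^{\mathrm{old}}}}\mathbb{E}_{\pi^{\mathrm{new}}}[A]$ is itself bounded by $M D_{\mathrm{old}}$, we can combine the two terms as $\frac{1}{1-\gamma}+\frac{2\gamma}{(1-\gamma)^2}$. Either we accept a constant of this combined form, or we use the tighter CPO argument, in which the surrogate cancels in the two-sided comparison, to reach exactly $\frac{2\gamma}{(1-\gamma)^2}$. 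I expect that this constant bookkeeping, rather than the structure of the argument, is where care is needed.
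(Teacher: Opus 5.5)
\textbf{The gap is in Step 3 (stability), and a second one sits in the final goal-averaging step.} Your plasticity half is the paper's argument essentially verbatim: PDL, centering with $\mathbb{E}_{a\sim\pi^{\mathrm{old}}}[A]=0$, H\"older, Pinsker, Jensen. For stability you use the same split of $d_g^{\pi^{\mathrm{new}}}$ into $d_g^{\pi^{\mathrm{old}}}$ plus drift, with the CPO occupancy lemma.

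In Step 3, neither of your two exits closes the gap. Option (a) proves a different inequality. Option (b) misreads CPO: their bound controls $\bigl|J(\pi')-J(\pi)-\tfrac{1}{1-\gamma}\mathbb{E}_{s\sim d^{\pi},a\sim\pi'}[A^{\pi}(s,a)]\bigr|$, so the surrogate never cancels.

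No bookkeeping can give $\tfrac{2\gamma}{(1-\gamma)^2}$ for every $\gamma$. Take one goal, one self-looping state, rewards $1$ and $0$ on two actions, $\pi^{\mathrm{old}}$ uniform, and $\pi^{\mathrm{new}}(a_1)=\tfrac12+\delta$. Then $d^{\pi^{\mathrm{new}}}=d^{\pi^{\mathrm{old}}}$, $M_{\mathrm{old}}=\tfrac12$, $D_{\mathrm{old}}=2|\delta|+O(|\delta|^3)$, and $|J_{\mathrm{old}}(\pi^{\mathrm{new}})-J_{\mathrm{old}}(\pi^{\mathrm{old}})|=\tfrac{|\delta|}{1-\gamma}$. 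This exceeds the claimed bound for $\gamma<\tfrac13$ and small $\delta$. The paper reaches its constant only because it bounds term (II) and never bounds term (I), so your diagnosis is sharper than the source.

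What you are missing is the factor $\tfrac12$ from Pinsker in the drift term. Since $\mathbb{E}_{s\sim d_g^{\pi^{\mathrm{old}}}}[D_{\mathrm{TV}}(s)]\le\sqrt{\tfrac12\mathbb{E}[D_{\mathrm{KL}}]}$, which is half of $\sqrt{2\,\mathbb{E}[D_{\mathrm{KL}}]}$, the drift term is $\tfrac{\gamma}{(1-\gamma)^2}M_g\sqrt{2\,\mathbb{E}[D_{\mathrm{KL}}]}$, not $\tfrac{2\gamma}{(1-\gamma)^2}(\cdots)$. Adding the surrogate $\tfrac{1}{1-\gamma}M_g\sqrt{2\,\mathbb{E}[D_{\mathrm{KL}}]}$ gives exactly $\tfrac{1}{(1-\gamma)^2}M_g\sqrt{2\,\mathbb{E}[D_{\mathrm{KL}}]}$. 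This is dominated by the stated constant precisely when $\gamma\ge\tfrac12$. The factor of two that you and the paper give away in the drift term is the room needed to absorb term (I), so your proof goes through under the extra hypothesis $\gamma\ge\tfrac12$.

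Your union-of-supports worry can be avoided. Replace $f_g$ by $f_g\,\mathbf{1}[s\in\mathrm{supp}(d_g^{\pi^{\mathrm{new}}})]$ before splitting. This leaves the PDL integrand unchanged and makes both $|f_g|\le M_g$ and $|f_g|\le 2D_{\mathrm{TV}}(s)M_g$ hold at every state. Here $M_g$ must be read as a sup over all actions at visited states, which the centering step needs anyway.

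\textbf{Goal averaging.} This second gap is shared with the paper. Per goal you obtain $C\,M_g\sqrt{2X_g}$ with $X_g=\mathbb{E}_{s\sim d_g}[D_{\mathrm{KL}}]$. Jensen gives $\mathbb{E}_g\sqrt{X_g}\le\sqrt{\mathbb{E}_gX_g}$, but $\mathbb{E}_g[M_g\sqrt{X_g}]\le\mathbb{E}_g[M_g]\sqrt{\mathbb{E}_gX_g}$ is false in general.

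For a counterexample, take $N$ equally likely goals with disjoint states. Use the single-state example above for one goal and zero reward (so $M_g=0$) for the rest, and change the policy only on the first goal. The left-hand sides then scale like $1/N$, while $M_{\mathrm{old}}D_{\mathrm{old}}$ and $M_{\mathrm{new}}D_{\mathrm{new}}$ scale like $N^{-3/2}$. Both stated inequalities therefore fail, the plasticity one already at $N=2$.

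You need to replace $M_{\mathrm{old}}$ and $M_{\mathrm{new}}$ by $\sup_g M_g$, or assume $M_g$ is constant in $g$. Your ``uniform constants'' caveat is the right repair. State it as a hypothesis rather than as a Jensen step.
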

\end{theorembox}

\begin{proof}
Using Performance Difference Lemma \citep{kakade2002approximately} 
and discounted occupancy bounds \citep{achiam2017constrained}, 
we derive performance difference bounds between old and new policies 
under goal-conditioned task transitions. 
Full derivations are in Appendix~\ref{app:thereom_deri}.
\end{proof}

\begin{remark}
An effective stability-plasticity trade-off requires minimizing $M_{\mathrm{old}}$ while constraining $D_{\mathrm{old}}$ and $D_{\mathrm{new}}$ with controllable $M_{\mathrm{new}}$.
\end{remark}
\begin{figure*}[t]
    \centering
    \includegraphics[width=0.9\textwidth]{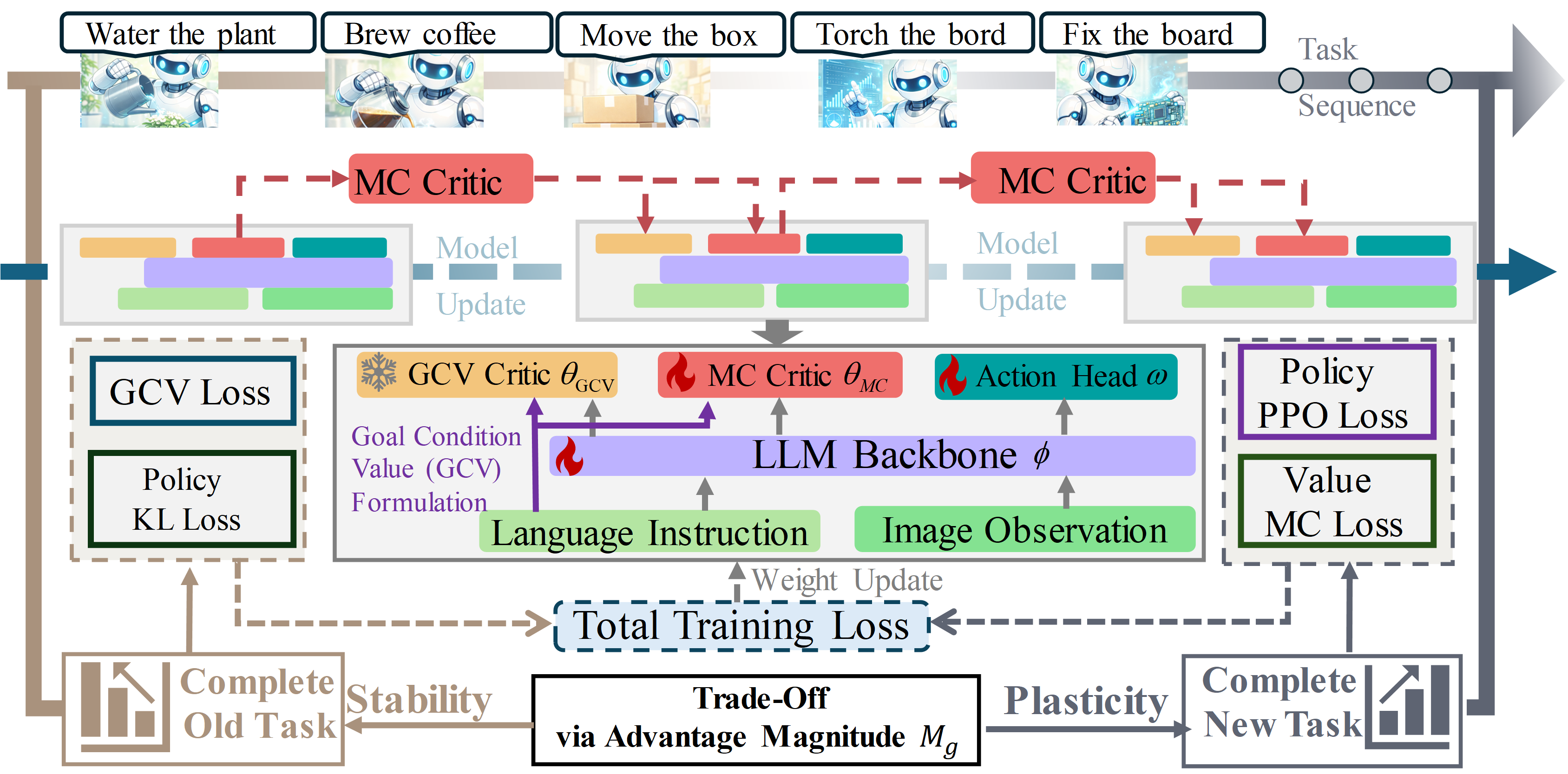}
    \caption{The proposed CRL-VLA method.
CRL-VLA is a continuous learning framework for VLA models that treats the goal-conditioned advantage magnitude as a key factor determining the trade-off between stability and plasticity.
CRL-VLA achieves this insight through a dual critic architecture and a PPO loss with several regularization terms, enabling asymmetric tuning that efficiently adapts to new tasks while preserving prior knowledge.
}
\label{fig:CRL_VLA}
\end{figure*}
\subsection{Decouple control for Stability and Plasticity Metrics}


The central question becomes: can we control stability metrics $M_{\mathrm{old}}$ and plasticity metrics $M_{\mathrm{new}}$ separately? The answer depends on understanding the fundamental differences between these two quantities.
Corollaries~\ref{cor:v_only_bound_main}-\ref{cor:mc_advantage_bound_main} reveal theses two metrics are bounded by independent factors with concrete derivations in Appendix~\ref{app:corollaries}. $M_{\mathrm{old}}$ is constrained by critic approximation error
$\varepsilon_V$ on $\mathcal{D}_{\mathrm{old}}$, while $M_{\mathrm{new}}$ is limited by environment return range $[G_{\min}, G_{\max}]$. Since $\varepsilon_V$ operates on replay data and environment return range is task-intrinsic, a single constraint cannot regulate both. This necessitates dual mechanisms: minimizing $\varepsilon_V$ via constrained updates on $\mathcal{D}_{\mathrm{old}}$ controls $M_{\mathrm{old}}$, while MC-based estimation on new tasks exploits natural boundedness to control $M_{\mathrm{new}}$. The orthogonality of these mechanisms enables simultaneous stability and plasticity without trade-offs: we control $M_{\mathrm{old}}$ by minimizing $\varepsilon_V$ on old task data, while $M_{\mathrm{new}}$ remains naturally bounded by the environment's return range during new task learning.



\begin{corollarybox}
\begin{corollary}[Controllability of Stability]
\label{cor:v_only_bound_main}
Let $V_\theta$ denote the parameterized approximation and $V_{\mathrm{old}}$ the true anchor value. Under bounded rewards $|r(s,a,g_{\mathrm{old}})| \le R_{\max}$ and error $\varepsilon_V \triangleq \sup_{s,g_{\mathrm{old}}} |V_\theta - V_{\mathrm{old}}|$, the advantage magnitude on old tasks satisfies:
$
M_{\mathrm{old}} \le R_{\max}+(1+\gamma)\varepsilon_V +(1+\gamma)\|V_{\mathrm{old}}\|_{\infty}.
$
\end{corollary}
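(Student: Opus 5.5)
The plan is to bound the advantage pointwise and then pass the bound through the supremum in Definition~\ref{def:advantage_magnitude_main} and the expectation over $g_{\mathrm{old}}\sim p_{\mathrm{old}}$. Write the anchored advantage through the one-step Bellman relation,
\[
A^{\pi}(s,a,g) = Q^{\pi}(s,a,g)-V^{\pi}(s,g) = r(s,a,g) + \gamma\,\mathbb{E}_{s'\sim P(\cdot|s,a)}\big[V^{\pi}(s',g)\big] - V^{\pi}(s,g),
\]
where the anchor $\pi=\pi^{\mathrm{old}}$ gives $V^{\pi}=V_{\mathrm{old}}$. Since the corollary is meant to express controllability through the learned critic, I would not bound this directly by $R_{\max}+(1+\gamma)\|V_{\mathrm{old}}\|_\infty$. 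Instead, I would consider the critic-based advantage that the algorithm actually uses, $\hat A(s,a,g)=r(s,a,g)+\gamma V_\theta(s',g)-V_\theta(s,g)$, and relate it to the true one.

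\textbf{Step 1 (insert the critic).} Add and subtract $V_\theta$ in both value terms:
\[
A^{\pi} = r + \gamma\,\mathbb{E}_{s'}[V_\theta(s')] - V_\theta(s) + \gamma\,\mathbb{E}_{s'}[V_{\mathrm{old}}(s')-V_\theta(s')] + \big(V_\theta(s)-V_{\mathrm{old}}(s)\big).
\]
The two error differences together contribute at most $(1+\gamma)\varepsilon_V$ by the definition of $\varepsilon_V$.

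\textbf{Step 2 (bound the critic term).} Apply the triangle inequality to $|\gamma V_\theta(s')-V_\theta(s)|$, using $|V_\theta|\le |V_{\mathrm{old}}|+\varepsilon_V$. This step carries a bookkeeping subtlety. Done naively, it would double the $\varepsilon_V$ contribution, giving $2(1+\gamma)\varepsilon_V$. To avoid that, I would write $\gamma V_\theta(s')-V_\theta(s)=\gamma V_{\mathrm{old}}(s') - V_{\mathrm{old}}(s) + (\text{error terms})$ directly, so that the error terms are counted exactly once. This yields
\[
|A^{\pi}| \le R_{\max} + (1+\gamma)\varepsilon_V + \gamma|V_{\mathrm{old}}(s')| + |V_{\mathrm{old}}(s)|,
\]
and then $\gamma|V_{\mathrm{old}}(s')|+|V_{\mathrm{old}}(s)|\le(1+\gamma)\|V_{\mathrm{old}}\|_\infty$. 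Equivalently, I would bound the true advantage by $|r|+\gamma\|V_{\mathrm{old}}\|_\infty+\|V_{\mathrm{old}}\|_\infty$ and add the nonnegative $(1+\gamma)\varepsilon_V$ slack. This makes the estimate valid both for the true advantage and for the critic-estimated advantage $\hat A$, whose deviation from $A^{\pi}$ is at most $(1+\gamma)\varepsilon_V$.

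\textbf{Step 3 (lift to $M_{\mathrm{old}}$).} The pointwise bound is uniform over all $(s,a)$, so it holds on $\mathrm{supp}(d_{g}^{\pi^{\mathrm{new}}})$ for every $g_{\mathrm{old}}$. Taking the supremum and then the expectation over $p_{\mathrm{old}}$ preserves it, which gives the stated inequality.

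The main obstacle is interpretive rather than technical: deciding which advantage is being bounded, the true one or the $V_\theta$-based estimate. The $\varepsilon_V$ term is only meaningful for the estimate, so I would state explicitly that the bound covers both. I would also ensure $\varepsilon_V$ is taken as a supremum over next states $s'$, not only over visited states, so that Step 1 is valid.
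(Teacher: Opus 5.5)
Your argument is correct, but it is organized differently from the paper's. The paper never bounds the true advantage. It takes $\hat A(s,a)=r+\gamma\,\mathbb{E}_{s'}[V_\theta(s')]-V_\theta(s)$, applies the triangle inequality once, and substitutes $|V_\theta|\le\|V_{\mathrm{old}}\|_\infty+\varepsilon_V$. On that route $\varepsilon_V$ enters exactly once, so the ``doubling'' you guard against in Step~2 never arises. It only appears in your version because Step~1 inserts the critic into $A^{\pi}$ and you then bound $\hat A$ with the same substitution.

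Your route instead bounds the anchored advantage $A^{\pi^{\mathrm{old}}}$, which is what Definition~\ref{def:advantage_magnitude_main} literally uses. Via the Bellman identity you get $R_{\max}+(1+\gamma)\|V_{\mathrm{old}}\|_\infty$ with no $\varepsilon_V$ at all, and you then transfer the bound to $\hat A$ through $|\hat A-A^{\pi}|\le(1+\gamma)\varepsilon_V$. This gives a bound valid under both readings of $M_{\mathrm{old}}$; the paper's proof silently switches to the estimated advantage. It also makes explicit that for the true advantage the $\varepsilon_V$ term is pure slack, so the ``controllability'' interpretation only has content for $\hat A$. With this in view, Step~1 can be dropped.

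One caution: the transfer inequality needs the expected-next-state form of $\hat A$ that you use in Step~1. For the single-sample form in your opening line, $\hat A-A^{\pi}$ also contains $\gamma\bigl(V_{\mathrm{old}}(s')-\mathbb{E}_{s''}[V_{\mathrm{old}}(s'')]\bigr)$, which $\varepsilon_V$ does not control. The stated bound on $|\hat A|$ still holds in that case, but you get it from the paper's direct triangle inequality (equivalently, your Step~2 rewriting), not from the transfer.
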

\end{corollarybox}
\begin{remark}
The bound reveals that $M_{\mathrm{old}}$ can be directly controlled by constraining the value approximation error $\varepsilon_V$ on old tasks.
\end{remark}

\begin{corollarybox}
\begin{corollary}[Natural Boundedness of Plasticity]
\label{cor:mc_advantage_bound_main}
Let $G_t^g$ be the ground-truth MC return for goal $g$, with $G_t^g \in [G_{\min}, G_{\max}]$, at any time $t$.
Then, for the V-only advantage estimator $\hat A_{g_{\mathrm{new}}}(s,a)$, the advantage magnitude $M_{\mathrm{new}}$ satisfies:
$
M_{\mathrm{new}} \le 2(1 + \gamma) \max\{|G_{\min}|, |G_{\max}|\}.
$
\end{corollary}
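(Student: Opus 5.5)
The plan is to mirror the structure of Corollary~\ref{cor:v_only_bound_main}, with the learned critic replaced by Monte Carlo targets on the new task. I take the V-only advantage estimator to be the one-step form
\[
\hat A_{g_{\mathrm{new}}}(s_t,a_t) \;=\; r(s_t,a_t,g_{\mathrm{new}}) + \gamma \hat V(s_{t+1},g_{\mathrm{new}}) - \hat V(s_t,g_{\mathrm{new}}),
\]
where $\hat V$ is fitted by regression onto the MC returns $G_t^{g_{\mathrm{new}}}$ defined in the preliminaries. This is the same estimator whose true-value counterpart gives the $(1+\gamma)$ factors in Corollary~\ref{cor:v_only_bound_main}. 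I will write $G^\ast \triangleq \max\{|G_{\min}|,|G_{\max}|\}$.

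\textbf{Step 1: bound the critic.} The first step is to show that $\|\hat V(\cdot,g_{\mathrm{new}})\|_\infty \le G^\ast$ on the support of $d^{\pi^{\mathrm{new}}}_{g_{\mathrm{new}}}$. Two routes are available. If $\hat V$ is the MSE minimizer over MC targets, it equals the conditional expectation $\mathbb{E}[G_t\mid s_t=s]$, a convex combination of values in $[G_{\min},G_{\max}]$. Alternatively, one can simply assume the critic output is clipped to the empirical return range, which is the practical reading of ``natural boundedness.'' In either case $|\hat V| \le G^\ast$.

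\textbf{Step 2: bound the reward.} The second step controls the immediate reward by returns, using the telescoping identity $r(s_t,a_t,g) = G_t - \gamma G_{t+1}$. This gives $|r| \le (1+\gamma)G^\ast$. The identity is exact for the finite-horizon MC return if we set $G_{T+1}=0$.

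\textbf{Step 3: combine.} Applying the triangle inequality to the estimator yields
\[
|\hat A| \le |r| + \gamma|\hat V(s_{t+1})| + |\hat V(s_t)| \le (1+\gamma)G^\ast + (1+\gamma)G^\ast = 2(1+\gamma)G^\ast.
\]
This matches the stated constant. Since the bound holds pointwise on the support, it holds for the supremum in Definition~\ref{def:advantage_magnitude_main} for every $g_{\mathrm{new}}$, and therefore for the expectation over $p_{\mathrm{new}}$, which gives the bound on $M_{\mathrm{new}}$.

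\textbf{Main obstacle.} The delicate point is Step 1. The statement is phrased for the estimator $\hat A$, not the true $A^{\pi^{\mathrm{old}}}$, so I must make explicit that $M_{\mathrm{new}}$ is evaluated with $\hat A_{g_{\mathrm{new}}}$ in place of $A^\pi$. I must also justify that the MC-fitted critic stays within the return range at every visited state, including $s_{t+1}$. I would handle this either with the conditional-expectation argument, which is exact at the population optimum, or by stating clipping as a standing assumption. As a fallback, if one instead uses the estimator $G_t - \hat V(s_t)$, the bound $2G^\ast$ follows even more directly and is dominated by $2(1+\gamma)G^\ast$, so the claim survives under either convention.
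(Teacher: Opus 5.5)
Your proposal is correct and follows the paper's own proof essentially step for step: bound the MC-fitted critic by $\max\{|G_{\min}|,|G_{\max}|\}$, bound the reward through $r_t = G_t - \gamma G_{t+1}$, and combine the two with the triangle inequality to obtain $2(1+\gamma)\max\{|G_{\min}|,|G_{\max}|\}$. Your version also justifies two points the paper only asserts, namely $|\hat V| \le \max\{|G_{\min}|,|G_{\max}|\}$ (via the conditional-expectation minimizer or clipping) and the substitution of $\hat A$ for $A^{\pi}$ in the definition of $M_g$.
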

\end{corollarybox}

\begin{remark}
The bound reveals that $M_{\mathrm{new}}$ can be directly controlled by bounding by reward of MC estimator.
\end{remark}

\subsection{Dual Goal-Conditioned Critic for continual VLA}
Given that $D_{\mathrm{old}}$ and $D_{\mathrm{new}}$ in Theorem~\ref{thm:unified_kl} are realized via KL divergence, we develop value functions that make advantage magnitudes $M_g$ controllable, serving as a surrogate for direct advantage-based optimization. Furthermore, we enhance language-following capacity by conditioning the value network directly on language embeddings. Inspired by Corollaries~\ref{cor:v_only_bound_main} and~\ref{cor:mc_advantage_bound_main}, we propose a simple yet effective dual-critic framework.

\paragraph{Infeasibility of Direct Advantage Optimization in VLA}
Obtaining accurate advantage estimates in VLA settings requires expensive multi‑step rollouts and bootstrapping, which is hard to scale~~\citep{Chen2023Adaptive}. Concretely, for large VLA policies deployed on real robots or high‑fidelity simulators, each additional rollout entails a full forward pass of a vision–language backbone plus action decoding, making per‑update advantage estimation extremely costly~~\citep{Wen2024TinyVLA:}. 
Therefore, in continual VLA settings, MC estimation is preferred the TD bootstrapping due to prohibitive computational costs in scaling models and the bias inherent in value estimation.
\paragraph{Indirect Control via Value Functions}
Corollaries~\ref{cor:v_only_bound_main} and~\ref{cor:mc_advantage_bound_main} demonstrate that the theoretical upper bounds on the performance gap can be expressed through value functions. 
Guided by this insight, we control $M_g$ indirectly via value estimation rather than imposing direct constraints on the advantage network. 
While value estimation can be realized through either state-value functions or action-value functions, we focus on the state-value function $V(s,g)$ in the main text for conciseness; the extension to action-value functions is detailed in Appendix~\ref{Q_based Implementation}. 
Specifically, for a given VLA policy $\pi$ and language goal $g$, we approximate the advantage function $\hat{A}_g(s,a)$ as:
$\hat{A}_g(s,a) = r(s,a,g) + \gamma V_\pi(s',g) - V_\pi(s,g)$,
where $s \sim d_g^\pi$ is a state sampled from the goal-conditioned state distribution, $s' \sim \mathcal{P}(\cdot|s,a)$ is the successor state, $V_\pi(\cdot,g)$ is the state-value function for goal $g$ under policy $\pi$, and $\gamma \in (0,1)$ is the discount factor.

\paragraph{Goal-Conditioned Value Formulation}
To condition the value function on language goal $g$, we leverage the shared Vision-Language Model's embedding as the global state representation. Specifically, we concatenate language embedding with global state representations before the MLP (Figure~\ref{fig:CRL_VLA}), which outputs the scalar value $V(s,g)$. This joint encoding addresses the poor language-goal following capacity in VLA value heads~~\citep{intelligence2025pi05visionlanguageactionmodelopenworld}.

\paragraph{Dual-Critic Architecture for Independent Dimension Management}
To operationalize indirect control of $M_g$, we design a dual-critic architecture as follows.
\begin{enumerate}
\item Frozen Goal-Conditioned Value (GCV) Critic $\theta_{\text{GCV}}$.
Initialized from the old task's value network and frozen during new task training, it provides the reference value $V_{\text{old}}(s,g)$ for Corollary~\ref{cor:v_only_bound_main}, enabling regularization of value accuracy differences on old data while maintaining predictable bounds via value drift prevention.
\item Trainable MC critic $\theta_{\text{MC}}$. 
Updated on new task MC returns to implement Corollary~\ref{cor:mc_advantage_bound_main}, allowing natural emergence of $M_{\mathrm{new}}$ bounds. 
\end{enumerate}
Overall, the shared backbone $\phi$ serves as a unified feature extractor, while head decoupling enables targeted optimization. Specifically, the frozen 
GCV critic $\theta_{\text{GCV}}$. preserves value consistency and constrains value approximation error, 
guiding the action head $\omega$ to favor high-return actions in old task states. While the trainable MC critic $\theta_{\text{MC}}$ bounds growth while maintaining plasticity for continual learning.

\subsection{Regularization and Training Recipe}
We instantiate Theorem~\ref{thm:unified_kl} with Corollaries~\ref{cor:v_only_bound_main} and~\ref{cor:mc_advantage_bound_main} through three mechanisms that balance stability and plasticity by controlling $M_{\mathrm{old}}$, $M_{\mathrm{new}}$, $D_{\mathrm{new}}$, and $D_{\mathrm{old}}$. This unified algorithm applies to both state and action value functions, with the latter detailed in Appendix~\ref{Q_based Implementation}.

First, we control $M_{\mathrm{old}}$ through Goal-Conditioned Value (GCV) consistency.
At task transition, the converged MC critic is copied to $\theta_{\text{GCV}}$ and frozen. During new task training, backbone updates $\phi$ are penalized when $V_{\phi, \theta_{\text{GCV}}}$ deviates from $V_{\text{old}}$. 
At task transition, we freeze the converged MC  critic as $\theta_{\text{GCV}}$ anchored to reference values $V_{\text{old}}(s,g)$ from $\mathcal{B}_{\text{old}}$ to minimize $\varepsilon_V$ on the old task distribution, as required by Corollary~\ref{cor:v_only_bound_main}. 
We constrain $M_{\mathrm{old}}$ by ensuring the new policy generates trajectory-level action distributions consistent with the old policy when evaluated on sampled states $(s,g) \in \mathcal{B}_{\text{old}}$ from previous tasks. This is achieved through the following loss:
\begin{equation}
\label{GCV-regu}
\mathcal{L}^{V}_{\mathrm{GCV}}(\phi) = \beta_V \mathbb{E}_{(s,g)\sim\mathcal{B}_{\text{old}}} \left[ \left\| V_{\phi, \theta_{\text{GCV}}}(s,g) - V_{\text{old}}(s,g) \right\|^2 \right].
\end{equation}
Second, we realize bounded $M_{\mathrm{new}}$ via MC Critic learning.
Since the  Corollary~\ref{cor:mc_advantage_bound_main} directly shows controllable $M_{\mathrm{new}}$ is bounded by MC return $G_t^{g_{\text{new}}}$ during roll-out. Combined with dense reward shaping~~\citep{zhang2025reinbot}, this achieves accurate advantage estimation without constraining backbone plasticity. Thus, we train a learnable MC Critic critic $\theta_{\text{MC}}$ on new task trajectories:
\begin{equation}
\mathcal{L}^{V}_{\mathrm{MC}}({\theta_{\text{MC}}},\phi) =  \mathbb{E}_{(s,g)\sim\mathcal{B}_{\mathrm{new}}} \left[ \left\| V_{\phi,\theta_{\text{MC}}}(s,g) - G_t^{g_{\text{new}}} \right\|^2 \right].
\end{equation}

Our third mechanism constrains policy divergence $D_{\mathrm{new}}$ and $D_{\mathrm{old}}$ to control distribution shift. 
PPO's trust region constraint (Eq.~\ref{PPO_KL_constraint}, 
Appendix~\ref{PPO appendix}) limits the $\pi$ updates in the distribution of new task, thereby bounding 
$D_{\mathrm{new}}$ drift. To maintain $D_{\mathrm{old}}$ 
(Theorem~\ref{thm:unified_kl}), we propose
\begin{equation}
\label{KL_constrain}
\mathcal{L}_{\mathrm{KL}}(\phi,\omega) = \mathbb{E}_{(s,g)\sim\mathcal{B}_{\text{old}}} 
\left[ D_{\mathrm{KL}}(\pi_{\text{old}}(\cdot\mid s,g) \| \pi(\cdot\mid s,g)) \right],
\end{equation}
where $\pi_{\text{old}}$ is the old policy that generated $\mathcal{B}_{\text{old}}$.
%
The total training loss integrates these constraints:
\begin{equation}
\mathcal{L}_{\mathrm{total}} = \mathcal{L}_{\mathrm{PPO}} + \alpha\,\mathcal{L}_{\mathrm{KL}} + \beta_V\,\mathcal{L}^{V}_{\mathrm{GCV}} + \eta\,\mathcal{L}^{V}_{\mathrm{MC}}.
\label{eq:loss_total}
\end{equation}
This Lagrangian relaxation of Eq.~\eqref{objective_1} instantiates Theorem~\ref{thm:unified_kl}'s dual constraints, with concrete derivations in appendix \ref{Objectives and Total Training Loss}. $\mathcal{L}_{\mathrm{PPO}}$ and $\mathcal{L}_{\mathrm{KL}}$ bound $D_{\mathrm{new}}$ and $D_{\mathrm{old}}$ (policy divergence), while $\mathcal{L}^{V}_{\mathrm{GCV}}$ and $\mathcal{L}^{V}_{\mathrm{MC}}$ bound $M_{\mathrm{old}}$ and enable $M_{\mathrm{new}}$ exploitation (value accuracy and plasticity). Hyperparameters $\alpha, \beta_V, \eta$ act as Lagrange multipliers controlling the stability-plasticity tradeoff.
Overall, the CRL-VLA pipeline has been summarized in Alg.~\ref{algo:crl-vla}.
\begin{algorithm}[htbp]
\caption{CRL-VLA: VLA Policy Continuous Learning}                \label{algo:crl-vla}                  
\begin{algorithmic}[1]
\REQUIRE Pretrained VLA policy $\pi_{\theta_0}$, tasks $\{\mathcal{T}_k\}_{k=1}^K$.
\STATE \textbf{for} $k = 1$ to $K$:
    \STATE \hspace{1em} \textbf{if } $k = 1$:
        \STATE \hspace{2em} Train $\pi_{\theta_1}$ on $\mathcal{T}_1$ using standard RL
        \STATE \hspace{2em} Freeze $\pi_{\theta_1}$ and its value function as $\pi^\mathrm{old}$, $V_{\mathrm{old}}$
        \STATE \hspace{2em} Store replay buffer $\mathcal{B}_{\mathrm{old}}$
    \STATE \hspace{1em} \textbf{else}: 
        \STATE  \hspace{2em} \textbf{for} each training iteration:
            \STATE \hspace{3em} Collect new task trajectories $\tau \sim \pi_\theta$ on $\mathcal{T}_k$
            \STATE \hspace{3em} Sample batch $\mathcal{B}_{\mathrm{new}} \sim \tau$, $\mathcal{B}_{\mathrm{old}} \sim \mathcal{D}_{\mathrm{old}}$
            \STATE \hspace{3em} Compute losses $\mathcal{L}_{\mathrm{total}}$ by Eq.~\ref{eq:loss_total}
            \STATE \hspace{3em} Update $\theta$ via gradient descent on $\mathcal{L}_{\mathrm{total}}$
\STATE Continually adapted VLA policies $\{\pi_{\theta_k}\}_{k=1}^K$
\end{algorithmic}
\end{algorithm}

\section{Experiments}
In this section, we explore how the CRL-VLA effectively achieves a balance between stability and plasticity, thereby enhancing the robot's continuous learning ability for visual language manipulation tasks.
To this end, our experiments aim to investigate the following questions: 
\textbf{1)} Compared to baseline algorithms, does CRL-VLA exhibit better adaptability and resistance to forgetting when performing a single long-horizon task?
\textbf{2)} Compared to baseline algorithms, does CRL-VLA exhibit superior cross-task knowledge sharing and transfer performance when performing multiple long-horizon tasks?
\textbf{3)} How important are the core components of the proposed method to the overall learning performance of the CRL-VLA framework?

\subsection{Setup}
To comprehensively evaluate the performance of the proposed CRL-VLA, we constructed a set of VLA model continuous learning task settings based on the LIBERO~~\citep{liu2023libero} benchmark. 
Specifically, we constructed benchmark subsets by randomly sampling tasks from the LIBERO shared task pool. 
These subsets contain tasks randomly sampled from the original dataset to cover different task scales. 
Each benchmark contains multiple sets of tasks defined by language instructions.
See Appendix~\ref{appendix:Task and Experiment Details} for specific task settings. 
Furthermore, we used the same OpenVLA-oft model and PPO post-training configuration for all algorithms~~\citep{tan2025interactiveposttrainingvisionlanguageactionmodels, kim2025fine}. 
To systematically answer the first and second questions, two types of experimental settings were primarily considered:
\textbf{1)} Single-task learning scenarios: used to evaluate the adaptability of each algorithm to a specific task;
\textbf{2)} Multi-task learning scenarios: used to evaluate the cross-task knowledge sharing and continuous learning performance of each algorithm.

\paragraph{Baselines.}
To thoroughly compare the superiority of the CRL-VLA, we consider several classic CL algorithms.
\textbf{1)} Sequence Learning (SL)~~\citep{liu2023libero}: This is the most direct CL algorithm. 
The VLA model is only fine-tuned for new tasks, without any specific mechanism to prevent forgetting.
\textbf{2)} Learning Without Forgetting (LWF)~~\citep{li2017learning}: This baseline algorithm retrains the VLA model using only new task data while preserving the model's original functionality.
\textbf{3)} Experience Replay (ER)~~\citep{lopez2017gradient}: This baseline algorithm stores sample data from past tasks and converts them into gradient constraints, achieving gradient contextual memory.
\textbf{4)} Multi-Task Learning (MTL)~\citep{liu2023libero}: This baseline algorithm learns from both new and old tasks simultaneously. 
Our method includes two versions: one using V-based goal-conditioned value (\textbf{CRL-VLA (V)}) and the other using Q-based goal-conditioned value (\textbf{CRL-VLA (Q)}).

\paragraph{Evaluation Metrics.}
To systematically evaluate the performance of each algorithm, we adopted three standard metrics widely used in CRL benchmarks~\citep{wolczyk2021continualworld}: 
\textbf{1) Final Performance}; 
\textbf{2) Stability and Forgetting}; 
\textbf{3) Plasticity and Forward Transfer}.
Specifically, the \textbf{Final Performance} metric represents the overall task performance of the VLA model after continuous training, and is measured by the Final Average Return (\textbf{FAR}):
$
\mathrm{FAR} := \frac{1}{T} \sum_{i=1}^{T} R_{T,i}.
$
This metric reflects the average success rate of the VLA model across all tasks in the post-training phase.
\textbf{Stability and Forgetting} metrics are evaluated using Backward Transfer (\textbf{BWT}), which measures how learning a new task affects the performance of previously learned tasks:
$
\mathrm{BWT} := \frac{1}{T-1} \sum_{i=1}^{T-1} \left( R_{T,i} - R_{i,i} \right).
$
A negative BWT indicates a catastrophic forgetting rate in the VLA model.
To explicitly quantify forgetting, we further define the Forgetting (\textbf{F}) metric for the VLA model:
$
F_i := \max_{k \ge i} R_{k,i} - R_{T,i},
\mathrm{F} := \frac{1}{T-1} \sum_{i=1}^{T-1} F_i ,
$
This metric measures the degree of degradation from peak performance to final performance for each task.
\textbf{Plasticity and Forward Transfer} metrics are evaluated using Forward Transfer (\textbf{FT}), which reflects the extent to which prior knowledge facilitates the learning of new tasks.
Let $b_i$ represent the baseline success rate of task $\mathcal{T}_i$ before learning previous tasks.
We define: 
$\mathrm{FT}:=\frac{1}{T-1}\sum_{i=2}^{T}\left(R_{i-1,i}-b_i\right),
$
where a positive value indicates a beneficial forward transition in the VLA model.

\begin{table}[htbp]
\centering
\setlength{\tabcolsep}{3pt}        
\caption{Performance comparison on the  single-task learning scenario.
Specific task settings in Appendix~\ref{para:task1}.
$\uparrow$ denotes higher is better, $\downarrow$ denotes lower is better. Best results are \textbf{bold}, and \underline{second best} are underlined.}
\label{tab:spatial1_results_compact}
\begin{scriptsize} 
\begin{tabular}{lcccc}
\toprule
\textbf{Method} & \textbf{FAR} $(\uparrow)$ & \textbf{BWT} $(\uparrow)$ & \textbf{FT} $(\uparrow)$ & \textbf{F} $(\downarrow)$ \\
\midrule
SL~\citep{liu2023libero}            & 0.00          & -0.62          & -0.50          & 0.62 \\
MTL~\citep{liu2023libero}           & \underline{0.96} & \textbf{0.11}  & \textbf{-0.06} & \underline{0.06} \\
ER~\citep{lopez2017gradient}        & 0.60          & -0.51          & \textbf{-0.06} & 0.53 \\
LWF~\citep{li2017learning}          & 0.67          & -0.50          & \textbf{-0.06} & 0.50 \\
\midrule
CRL-VLA (V)                         & 0.67          & -0.49          & \textbf{-0.06} & 0.50 \\
CRL-VLA (Q)                         & \textbf{0.98} & \underline{-0.02} & \textbf{-0.06} & \textbf{0.03} \\
\bottomrule
\end{tabular}
\end{scriptsize}
\end{table}

\subsection{Main Results}
\paragraph{Single-task learning scenario.}
Tab.~\ref{tab:spatial1_results_compact} compares the continuous learning performance of various algorithms in single-task learning scenarios. 
The results show that, compared to the baselines, the CRL-VLA (Q) has the highest FAR metric value ($0.98$), the second highest BWT metric value ($-0.02$), and the lowest F metric value ($0.03$). 
Except for the SL baseline algorithm ($-0.5$), the other methods exhibit similarly high FT metric values ($-0.06$). 
Therefore, our method has better overall task success rate and resistance to forgetting old tasks, thus enabling more continuous and stable learning of new tasks. 
This also confirms that our method allows the VLA model to better balance task stability and plasticity during continuous learning.

\begin{table}[htbp]
\centering
\caption{
Performance comparison on the multi-task learning scenario. 
Specific task settings in Appendix~\ref{para:task2}. 
}
\label{tab:task3_results}
\setlength{\tabcolsep}{6pt} 
\begin{scriptsize} 
\begin{tabular}{lccc}
\toprule
\textbf{Method} & \textbf{FAR} $(\uparrow)$ & \textbf{BWT} $(\uparrow)$ & \textbf{FT} $(\uparrow)$ \\
\midrule
SL~\citep{liu2023libero}     & 0.62          & 0.07            & \textbf{0.25} \\
MTL~\citep{liu2023libero}    & 0.49          & 0.00            & \textbf{0.25} \\
ER~\citep{lopez2017gradient} & 0.62          & 0.05            & \underline{0.00} \\
LWF~\citep{li2017learning}   & 0.63          & \underline{0.10} & \underline{0.00} \\
\midrule
CRL-VLA (V)                  & \textbf{0.74} & \textbf{0.17}   & \underline{0.00} \\
CRL-VLA (Q)                  & \underline{0.66} & -0.03           & \underline{0.00} \\
\bottomrule
\end{tabular}
\end{scriptsize}
\end{table}
\begin{figure}[htbp]
    \centering
    \includegraphics[width=0.9\columnwidth]{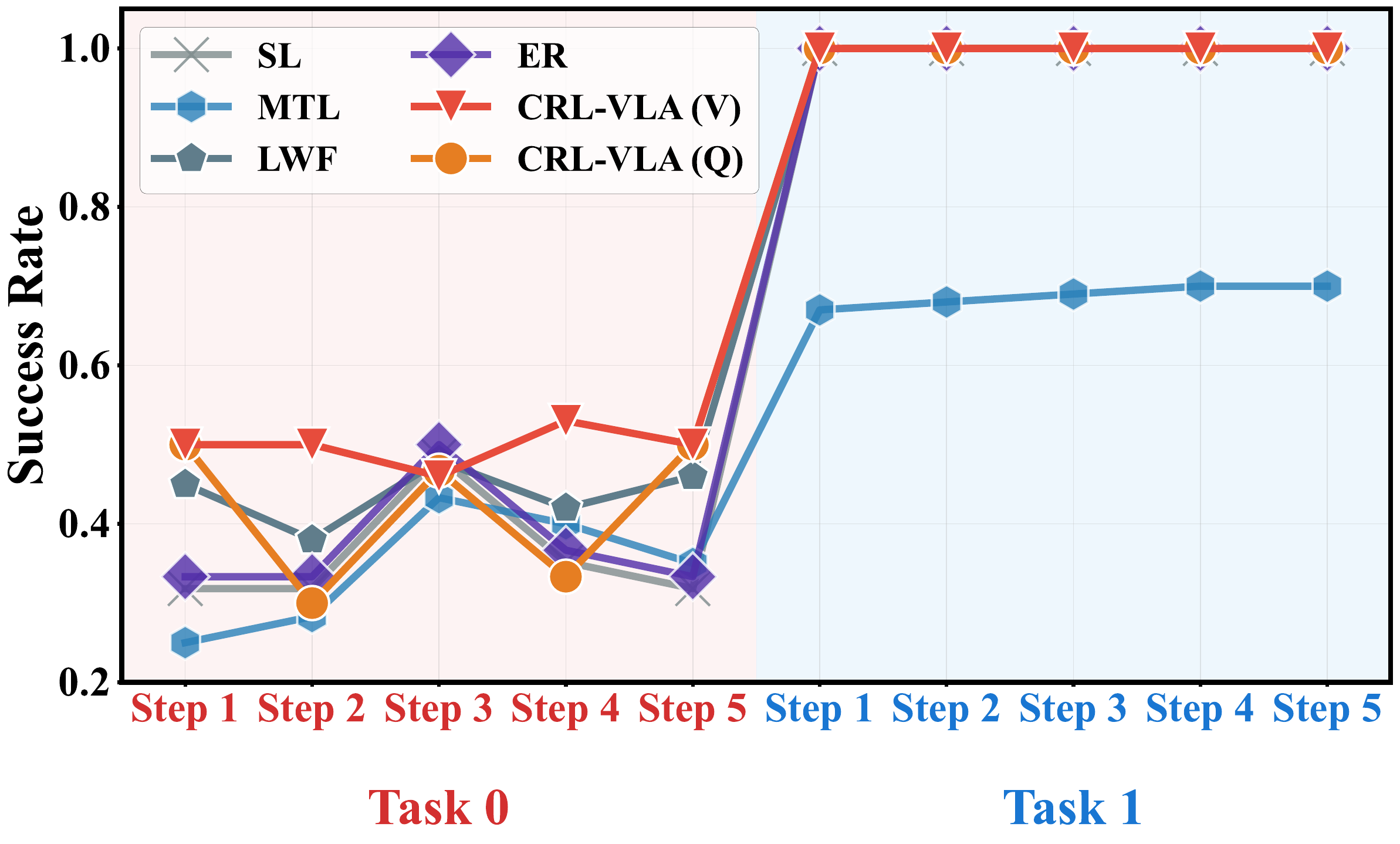}
    \caption{
    Performance comparison of various algorithms during continual learning.
    Specific task settings in Appendix~\ref{para:task2}.
    }
    \label{fig:crl_learning_process_comp}
\end{figure}

\begin{figure*}[htbp]
\centering
    \includegraphics[width=0.9\linewidth]{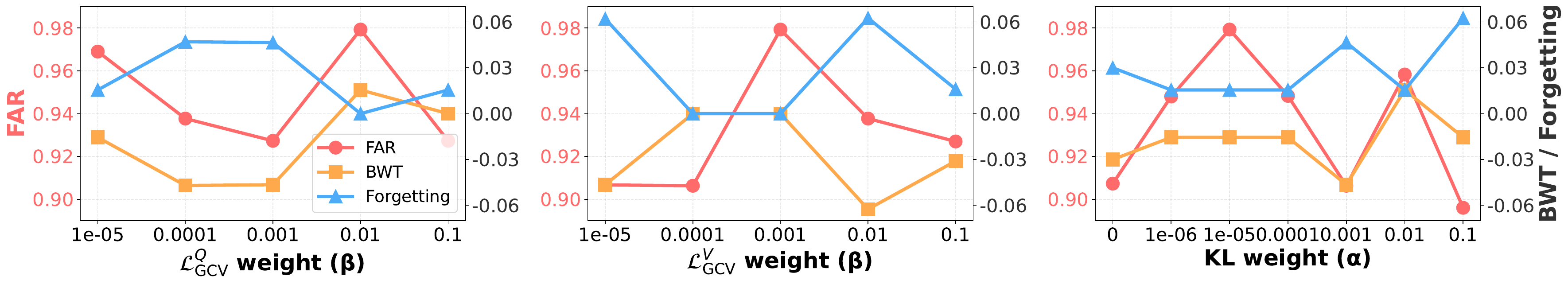}
    \caption{The impact of different values of the loss weights of $\mathcal{L}^{Q}_{\mathrm{GCV}}$ (left), $\mathcal{L}^{V}_{\mathrm{GCV}}$ (middle), and policy KL (right) on the continuous learning performance of the VLA model. Specific task settings in Appendix~\ref{para:task4}.
    }
    \label{fig:main_ablation}
\end{figure*}
\begin{figure*}[htbp]
\centering
    \includegraphics[width=0.9\linewidth]{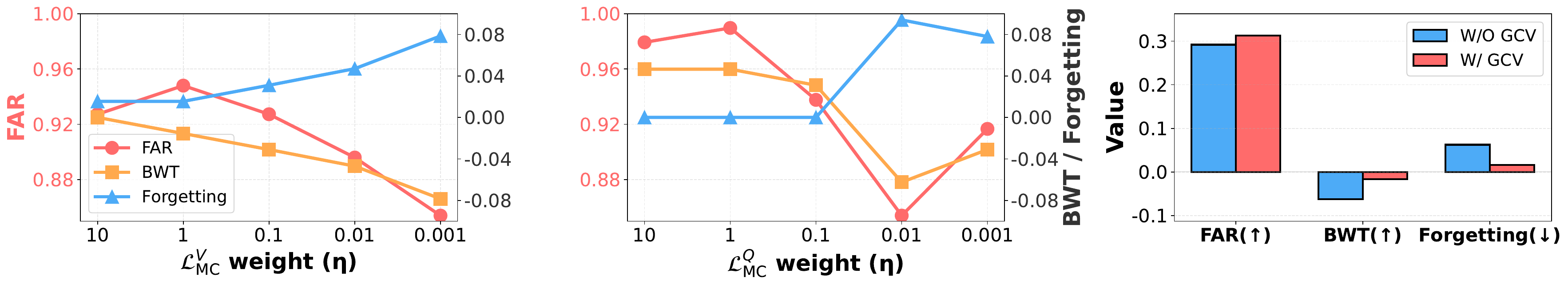}
    \caption{The impact of different values of the loss weights of $\mathcal{L}^{V}_{\mathrm{mc}}$ (left) and $\mathcal{L}^{Q}_{\mathrm{mc}}$ (middle) on the continuous learning performance of the VLA model, and an ablation comparison with and without GCV (right). Specific task settings in Appendix~\ref{para:task4}.
    }
    \label{fig:value_loss_mc}
\end{figure*}
\paragraph{Multi-task learning scenario.}
To further examine whether the proposed method CRL-VLA possesses better cross-task knowledge sharing and continuous learning capabilities in multi-task learning scenarios, we conducted a performance comparison experiment (Tab.~\ref{tab:task3_results}). 
The experimental results show that, compared to the baseline algorithms, our method CRL-VLA (V) has the highest FAR metric value ($0.74$) and the highest BWT metric value ($0.17$). 
Furthermore, regarding the FT metric values, SL and MTL have the highest values ($0.25$), while other methods have the same value ($0.$). 
Therefore, in multi-task learning scenarios, compared to the baseline, our method possesses better cross-task knowledge sharing and anti-forgetting capabilities, thus exhibiting better continuous learning capabilities. 
Moreover, the experiment result also demonstrates that our method can significantly improve the overall task success rate of the VLA model, and the performance comparison of each algorithm during the continuous learning process in Fig.~\ref{fig:crl_learning_process_comp} further confirms this. 
For more performance comparisons in multi-task learning scenarios, see Appendix Tab.~\ref{tab:object2_results}.

Furthermore, experimental results (Tab.~\ref{tab:spatial1_results_compact} and Tab.~\ref{tab:task3_results}) also show that $\mathcal{L}^{Q}_{\mathrm{GCV}}$ is more effective in single-task learning, while $\mathcal{L}^{V}_{\mathrm{GCV}}$ performs better in multi-task continuous learning environments. 
We believe the fundamental reason for this is that in multi-task continuous learning, the state-goal distribution becomes more diverse, and similar states may correspond to multiple effective action implementations in different tasks. 
$\mathcal{L}^{Q}_{\mathrm{GCV}}$ constrains value consistency at the action-condition level, implicitly enforcing outdated action semantics sampled from previous tasks, which may conflict with the current learning objective. 
In contrast, $\mathcal{L}^{V}_{\mathrm{GCV}}$ operates at the state-goal level and marginalizes actions, thus maintaining robustness to increased action diversity while preserving long-horizon value semantics.

\subsection{Ablation Study}
We first conduct ablation experiments on the GCV loss weights and policy KL loss weights in the CRL-VLA to evaluate the contribution of the GCV and policy KL components to the continuous learning performance of VLA. 
The experimental results are shown in Fig.~\ref{fig:main_ablation}.
For $\mathcal{L}^{Q}_{\mathrm{GCV}}$ and $\mathcal{L}^{V}_{\mathrm{GCV}}$ losses, moderate loss weights (approximately 0.01 and 0.001 respectively) yield better FAR and BWT, as well as lower forgetting, achieving a better post-training stability-plasticity balance for the VLA model. 
Furthermore, excessively small weights fail to suppress $M_{{\mathrm{old}}}$, while excessively large weights over-constrain updates and reduce forward adaptation.
On the other hand, moderate policy KL term weights $\alpha$ perform best (approximately 0.01). An excessively strong KL term narrows the feasible update region and indirectly inhibits the effective use of $M_{{\mathrm{new}}}$, while an excessively weak KL term leads to policy drift, potentially increasing the actual $M_{{\mathrm{old}}}$ even with critic anchoring.

We further conduct ablation experiments on the value loss MC weights (Fig.~\ref{fig:value_loss_mc}, left and middle).
The results show that the VLA model exhibits the best sustained learning performance when the weights of $\mathcal{L}^{V}_{\mathrm{MC}}$ and $\mathcal{L}^{Q}_{\mathrm{MC}}$ are approximately $1$.
When the value loss weight is too small, the judge error increases, leading to a larger actual advantage and consequently worsening the BWT and forgetting effects; conversely, the optimization process tends to be conservative, potentially reducing the final task performance. 
This trend confirms that the quality of the critic estimation is a direct means of adjusting $M_g$, which directly affects the sustained learning performance of the VLA model. 
Furthermore, experiments comparing the effects of GCV ablation (Fig.~\ref{fig:value_loss_mc}, right) demonstrate that directly allowing the critic to perceive the language goal brings better success rates, resistance to forgetting, and plasticity to the VLA model.

\section{Conclusion and Future Work}
In this work, we study continual learning for VLA models and identify the goal-conditioned advantage magnitude as the key factor governing the stability–plasticity trade-off. 
Building on this theoretical insight, we propose CRL-VLA, an asymmetric regularization framework realized through a dual-critic architecture with a novel goal-conditioned value formulation.
Empirical results demonstrate that CRL-VLA effectively mitigates catastrophic forgetting while maintaining strong forward transfer, outperforming existing baselines. 
An important direction is extending CRL-VLA to more diverse and unstructured task streams, including partial observability and non-stationary language goals. 

\bibliographystyle{icml2026}
\bibliography{example_paper}

\onecolumn
\appendix
\section{Theoretical Analysis and Proofs}
\label{sec:appendix}

\subsection{Preliminaries and Definitions}

\begin{definition}[Mixture Occupancy Measures]
\label{def:mixture_occupancy}
Given a goal distribution $p(g)$, we define the mixture occupancy measure as:
\[
d_p^{\pi}(s) \triangleq \mathbb{E}_{g\sim p}[d_g^{\pi}(s)],
\]
where $d_g^{\pi}(s) = (1-\gamma)\sum_{t=0}^{\infty} \gamma^t \Pr(s_t = s \mid g, \pi) $ is the discounted state visitation distribution.
\end{definition}

\begin{definition}[Advantage Magnitude]
\label{def:advantage_magnitude}
For any policy $\pi^\kappa$ and goal $g$, we define the \textit{Advantage Magnitude}, denoted as $M_g(\pi^\kappa)$, as the maximum absolute advantage of the anchored policy $\pi$ (typically set as the previous policy $\pi^\mathrm{old}$) evaluated over the state-action pairs visited by $\pi^\kappa$:
\begin{equation}
    M_g(\pi^\kappa) \triangleq \sup_{(s,a) \in \mathrm{supp}(d_g^{\pi^\kappa})} \big| A^{\pi}(s, a) \big|.
\end{equation}
Based on Definition~\ref{def:advantage_magnitude}, we define the aggregate stability and plasticity metrics evaluated on the new policy's distribution:
\[
M_{{\mathrm{old}}} \triangleq \mathbb{E}_{g_{\mathrm{old}} \sim p_{\mathrm{old}}} \left[ M_{g_{\mathrm{old}}}(\pi^{\mathrm{new}}) \right], \quad
M_{{\mathrm{new}}} \triangleq \mathbb{E}_{g_{\mathrm{new}} \sim p_{\mathrm{new}}} \left[ M_{g_{\mathrm{new}}}(\pi^{\mathrm{new}}) \right].
\]
\end{definition}

\subsection{Proof of Theorem~\ref{thm:unified_kl}}
\label{app:thereom_deri}
We employ the Goal-Conditioned Performance Difference Lemma (PDL) \citep{kakade2002approximately}:
\begin{lemma}[Goal-Conditioned PDL]
\label{lem:gc_pdl}
For any policies $\pi^\mathrm{new}, \pi^\mathrm{old}$ and goal $g$, the performance difference satisfies:
\[
J_g(\pi^\mathrm{new}) - J_g(\pi^\mathrm{old}) = \frac{1}{1-\gamma} \mathbb{E}_{s\sim d_g^{\pi^\mathrm{new}}}\left[ \mathbb{E}_{a\sim\pi^\mathrm{new}(\cdot|s,g)}[A_g^{\pi^\mathrm{old}}(s,a)] \right].
\]
\end{lemma}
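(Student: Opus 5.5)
We fix a goal $g$ and treat the goal-conditioned MDP as an ordinary MDP with reward $r(\cdot,\cdot,g)$, policy-independent transition kernel $P$, and a common initial-state distribution $\rho_0$ shared by both policies. The proof is the standard telescoping argument of \citet{kakade2002approximately}, carried out with $g$ held fixed. Throughout we abbreviate $V\equiv V^{\pi^{\mathrm{old}}}(\cdot,g)$, $Q\equiv Q^{\pi^{\mathrm{old}}}(\cdot,\cdot,g)$ and $A_g^{\pi^{\mathrm{old}}}=Q-V$. We assume rewards bounded by $R_{\max}$, which gives $\|V\|_\infty\le R_{\max}/(1-\gamma)$.

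\textbf{Step 1 (telescoping).} For any trajectory $\tau=(s_0,a_0,s_1,\dots)$ we have the pathwise identity
\[
\sum_{t=0}^{\infty}\gamma^t\big(r(s_t,a_t,g)+\gamma V(s_{t+1})-V(s_t)\big)=\sum_{t=0}^{\infty}\gamma^t r(s_t,a_t,g)-V(s_0).
\]
The partial sums telescope to $\sum_{t<T}\gamma^t r_t+\gamma^{T}V(s_T)-V(s_0)$, and the boundary term $\gamma^T V(s_T)$ vanishes as $T\to\infty$ because $V$ is bounded. We take expectations over $\tau\sim\pi^{\mathrm{new}},g$. The right-hand side then becomes $J_g(\pi^{\mathrm{new}})-\mathbb{E}_{s_0\sim\rho_0}[V(s_0)]=J_g(\pi^{\mathrm{new}})-J_g(\pi^{\mathrm{old}})$, since the initial-state distribution does not depend on the policy.

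\textbf{Step 2 (identify the advantage).} On the left-hand side we condition on $(s_t,a_t)$ and use the tower property. Because $s_{t+1}\sim P(\cdot\mid s_t,a_t)$ regardless of which policy generated $a_t$, we get
\[
\mathbb{E}\big[r(s_t,a_t,g)+\gamma V(s_{t+1})\mid s_t,a_t\big]=Q(s_t,a_t).
\]
Hence each summand has expectation $\gamma^t\,\mathbb{E}_{s_t,\,a_t\sim\pi^{\mathrm{new}}(\cdot\mid s_t,g)}[A_g^{\pi^{\mathrm{old}}}(s_t,a_t)]$.

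\textbf{Step 3 (occupancy measure).} Summing over $t$ and using $d_g^{\pi^{\mathrm{new}}}(s)=(1-\gamma)\sum_t\gamma^t\Pr(s_t=s\mid g,\pi^{\mathrm{new}})$ gives
\[
\sum_t\gamma^t\,\mathbb{E}[A(s_t,a_t)]=\frac{1}{1-\gamma}\,\mathbb{E}_{s\sim d_g^{\pi^{\mathrm{new}}}}\,\mathbb{E}_{a\sim\pi^{\mathrm{new}}(\cdot\mid s,g)}\big[A_g^{\pi^{\mathrm{old}}}(s,a)\big],
\]
which is the claimed identity.

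The only delicate point, and the main obstacle, is justifying the interchange of the infinite sum with the expectation in Steps 1 and 3, together with the vanishing of the boundary term. We handle both with bounded rewards. Each summand is bounded in absolute value by $\gamma^t\big(R_{\max}+2R_{\max}/(1-\gamma)\big)$, which is summable, so dominated convergence (equivalently Fubini--Tonelli) applies. The same bound gives $\gamma^T\,\mathbb{E}|V(s_T)|\to0$. For general (possibly continuous) state spaces, Step 3 should be read with $d_g^{\pi^{\mathrm{new}}}$ as a measure rather than a mass function; the argument is unchanged.
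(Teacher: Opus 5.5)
Your proof is correct and is the standard telescoping argument of Kakade and Langford. The paper does not reprove the lemma; it simply invokes that argument by citation, with the goal $g$ held fixed. Your explicit assumptions (a common initial-state distribution and bounded rewards) and your dominated-convergence justification for the sum--expectation interchange and the vanishing boundary term supply exactly what the citation leaves implicit.
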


\subsubsection{Proof of Stability Bound}
We aim to bound the performance degradation on old tasks: $|J_{\mathrm{old}}(\pi^{\mathrm{new}}) - J_{\mathrm{old}}(\pi^{\mathrm{old}})|$. 
Fix a goal $g \sim p_{\mathrm{old}}$. Let $f_g(s) \triangleq \mathbb{E}_{a\sim\pi^{\mathrm{new}}(\cdot|s,g)}[A_g^{\pi^{\mathrm{old}}}(s,a)]$.
Using Lemma~\ref{lem:gc_pdl}, we decompose the expectation over $d_g^{\pi^{\mathrm{new}}}$:
\[
J_g(\pi^{\mathrm{new}}) - J_g(\pi^{\mathrm{old}}) = \frac{1}{1-\gamma} \left( \underbrace{\mathbb{E}_{s\sim d_g^{\pi^{\mathrm{old}}}}[f_g(s)]}_{\text{(I) Action Mismatch}} + \underbrace{\sum_s (d_g^{\pi^{\mathrm{new}}}(s) - d_g^{\pi^{\mathrm{old}}}(s)) f_g(s)}_{\text{(II) Occupancy Mismatch}} \right).
\]
\textbf{Bounding the Supremum Term:}
Note that for any state $s \in \mathrm{supp}(d_g^{\pi^\mathrm{new}})$ and action $a \sim \pi^\mathrm{new}(\cdot|s,g)$, the pair $(s,a)$ falls within the support of the visiting policy. By Definition~\ref{def:advantage_magnitude}:
\[
\big| f_g(s) \big| = \big| \mathbb{E}_{a\sim\pi^{\mathrm{new}}}[A_g^{\pi^{\mathrm{old}}}(s,a)] \big| \le \sup_{a} \big| A_g^{\pi^{\mathrm{old}}}(s,a) \big| \le M_g(\pi^{\mathrm{new}}).
\]

\textbf{Term (II) Occupancy Mismatch:}
This term represents the drift in state distribution. Applying H\"older's inequality and the discounted occupancy bound \citep{achiam2017constrained}:
\[
\big| \text{Term (II)} \big| \le \|d_g^{\pi^{\mathrm{new}}} - d_g^{\pi^{\mathrm{old}}}\|_1 \cdot \sup_{s \in \mathrm{supp}(d_g^{\pi^\mathrm{new}})} |f_g(s)|.
\]
Substituting the bounds:
\[
\big| \text{Term (II)} \big| \le \left( \frac{2\gamma}{1-\gamma} \mathbb{E}_{s\sim d_g^{\pi^{\mathrm{old}}}} [D_{\mathrm{TV}}(\pi^{\mathrm{new}} \| \pi^{\mathrm{old}})] \right) \cdot M_g(\pi^{\mathrm{new}}).
\]
Applying Pinsker's inequality ($D_{\mathrm{TV}} \le \sqrt{\frac{1}{2} D_{\mathrm{KL}}}$) and substituting back into the PDL (multiplying by $\frac{1}{1-\gamma}$):
\[
\text{Error}_{\text{occ}} \le \frac{2\gamma}{(1-\gamma)^2} M_g(\pi^{\mathrm{new}}) \cdot \mathbb{E}_{s\sim d_g^{\pi^{\mathrm{old}}}} \left[\sqrt{\frac{1}{2} D_{\mathrm{KL}}(\pi^{\mathrm{new}} \| \pi^{\mathrm{old}})}\right].
\]
Taking the expectation over $g \sim p_{\mathrm{old}}$, applying Jensen's inequality $\mathbb{E}[\sqrt{X}] \le \sqrt{\mathbb{E}[X]}$, and using the definition $D_{\mathrm{old}} \triangleq \sqrt{2\,\mathbb{E}_{s\sim d_{\mathrm{old}}^{\pi^{\mathrm{old}}}}[D_{\mathrm{KL}}]}$:
\[
|J_{\mathrm{old}}(\pi^{\mathrm{new}}) - J_{\mathrm{old}}(\pi^{\mathrm{old}})| \le \frac{2\gamma}{(1-\gamma)^2} \underbrace{\mathbb{E}_{g \sim p_{\mathrm{old}}}[M_g(\pi^{\mathrm{new}})]}_{M_{\mathrm{old}}} \cdot D_{\mathrm{old}}.
\]

\subsubsection{Proof of Plasticity Bound}
For new tasks, we consider the performance gain: $J_{\mathrm{new}}(\pi^{\mathrm{new}}) - J_{\mathrm{new}}(\pi^{\mathrm{old}})$.
Using Lemma~\ref{lem:gc_pdl} with $\pi^{\mathrm{old}}$ as the anchor, for a goal $g \sim p_{\mathrm{new}}$:
\[
J_g(\pi^{\mathrm{new}}) - J_g(\pi^{\mathrm{old}}) = \frac{1}{1-\gamma} \mathbb{E}_{s\sim d_g^{\pi^{\mathrm{new}}}}\left[ \mathbb{E}_{a\sim\pi^{\mathrm{new}}}[A_g^{\pi^{\mathrm{old}}}(s,a)] \right].
\]
Since $\mathbb{E}_{a\sim\pi^{\mathrm{old}}}[A_g^{\pi^{\mathrm{old}}}(s,a)] = 0$, we can rewrite the inner expectation:
\[
\mathbb{E}_{a\sim\pi^{\mathrm{new}}}[A_g^{\pi^{\mathrm{old}}}] = \sum_a (\pi^{\mathrm{new}}(a|s) - \pi^{\mathrm{old}}(a|s)) A_g^{\pi^{\mathrm{old}}}(s,a).
\]
By H\"older's inequality:
\[
\big| \mathbb{E}_{a\sim\pi^{\mathrm{new}}}[A_g^{\pi^{\mathrm{old}}}] \big| \le 2 D_{\mathrm{TV}}(\pi^{\mathrm{new}} \| \pi^{\mathrm{old}}) \cdot \sup_{a} \big| A_g^{\pi^{\mathrm{old}}}(s,a) \big|.
\]
For states $s$ visited by $\pi^{\mathrm{new}}$, the advantage is bounded by $M_g(\pi^{\mathrm{new}})$. Applying Pinsker's inequality:
\[
\mathbb{E}_{a\sim\pi^{\mathrm{new}}}[A_g^{\pi^{\mathrm{old}}}] \le M_g(\pi^{\mathrm{new}}) \sqrt{2 D_{\mathrm{KL}}(\pi^{\mathrm{new}} \| \pi^{\mathrm{old}})}.
\]
Substituting back into the PDL:
\[
J_g(\pi^{\mathrm{new}}) - J_g(\pi^{\mathrm{old}}) \le \frac{1}{1-\gamma} M_g(\pi^{\mathrm{new}}) \mathbb{E}_{s\sim d_g^{\pi^{\mathrm{new}}}} \left[ \sqrt{2 D_{\mathrm{KL}}(\pi^{\mathrm{new}} \| \pi^{\mathrm{old}})} \right].
\]
Taking the expectation over $g \sim p_{\mathrm{new}}$, applying Jensen's inequality, and using the definitions of $M_{\mathrm{new}}$ and $D_{\mathrm{new}}$:
\[
J_{\mathrm{new}}(\pi^{\mathrm{new}}) - J_{\mathrm{new}}(\pi^{\mathrm{old}}) \le \frac{1}{1-\gamma} M_{\mathrm{new}} \cdot D_{\mathrm{new}}.
\]
\qed

\subsection{Corollaries and Proofs}
\label{app:corollaries}

\subsubsection{Bounds on Advantage Magnitude}
The metric $M_g(\pi^{\mathrm{new}})$ defined in Definition~\ref{def:advantage_magnitude} depends on the magnitude of the estimated advantage $\hat{A}$. We show that this magnitude is naturally bounded by the critic's approximation error (for old tasks) or the return range (for new tasks).

\begin{corollary}[Stability Control via Critic Error]
For old goals $g \in p_{\mathrm{old}}$, assume rewards are bounded by $R_{\max}$ and the critic approximates the true value with error $\varepsilon$. Then $M_{g}(\pi^{\mathrm{new}})$ is bounded by:
\[
M_{g_{\mathrm{old}}}(\pi^{\mathrm{new}}) \le C_1 R_{\max} + C_2 \varepsilon,
\]
where $C_1, C_2$ are constants depending on $\gamma$.
\end{corollary}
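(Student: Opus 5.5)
The plan is to bound the advantage pointwise on the support of $d_g^{\pi^{\mathrm{new}}}$ by working with the V-only (one-step TD) advantage estimator. This is the same estimator used in the main text, $\hat A_g(s,a)=r(s,a,g)+\gamma V_\theta(s',g)-V_\theta(s,g)$. The constants will come out as $C_1=1$ plus a term absorbing $\|V_{\mathrm{old}}\|_\infty$, and $C_2=1+\gamma$, consistent with Corollary~\ref{cor:v_only_bound_main}.

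\textbf{Step 1: triangle inequality on the TD error.} Fix an old goal $g$ and any pair $(s,a)$ visited by $\pi^{\mathrm{new}}$, with successor $s'$. I would write
\[
|\hat A_g(s,a)| \le |r(s,a,g)| + \gamma|V_\theta(s',g)| + |V_\theta(s,g)|.
\]
Then I bound each critic value by splitting it as $|V_\theta| \le |V_\theta - V_{\mathrm{old}}| + |V_{\mathrm{old}}| \le \varepsilon + \|V_{\mathrm{old}}\|_\infty$. This uses the uniform error $\varepsilon=\sup_{s,g}|V_\theta-V_{\mathrm{old}}|$, and the supremum is what lets me apply it at both $s$ and $s'$.

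\textbf{Step 2: remove the value-norm term.} Since rewards are bounded, $\|V_{\mathrm{old}}\|_\infty \le R_{\max}/(1-\gamma)$. Substituting gives
\[
|\hat A_g(s,a)| \le R_{\max}\Big(1+\tfrac{1+\gamma}{1-\gamma}\Big) + (1+\gamma)\varepsilon .
\]
So $C_1=\frac{2}{1-\gamma}$ and $C_2=1+\gamma$.

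\textbf{Step 3: pass to $M_{g_{\mathrm{old}}}(\pi^{\mathrm{new}})$.} I take the supremum over the support and then the expectation over $g\sim p_{\mathrm{old}}$, which yields the aggregate form if desired.

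\textbf{Main obstacle.} Definition~\ref{def:advantage_magnitude} uses the true advantage $A^{\pi^{\mathrm{old}}}$, whereas the section's framing uses the estimated $\hat A$. The cleanest route is to interpret $M_g$ via the estimator, as the subsection preamble states. As a fallback, I would bound the true advantage directly. Writing $A^{\pi^{\mathrm{old}}}(s,a)=\mathbb{E}_{s'}[r+\gamma V_{\mathrm{old}}(s')]-V_{\mathrm{old}}(s)$ gives $|A|\le R_{\max}+(1+\gamma)\|V_{\mathrm{old}}\|_\infty$ with no dependence on $\varepsilon$ at all, so the bound still holds trivially with $C_2\ge 0$. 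The remaining technical care is the treatment of $s'$: it is random given $(s,a)$, and it may fall outside the region where the critic error is controlled. Assuming $\varepsilon$ is a supremum over all states, as in Corollary~\ref{cor:v_only_bound_main}, resolves this.
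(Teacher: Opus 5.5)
Your proposal is correct and follows the paper's route. The paper's omitted proof is the triangle inequality on $\hat A = r + \gamma V_\theta(s') - V_\theta(s)$ combined with the uniform split $|V_\theta| \le \|V_{\mathrm{old}}\|_\infty + \varepsilon$, as carried out in the V-only path bound, which gives $R_{\max} + (1+\gamma)(\|V_{\mathrm{old}}\|_\infty + \varepsilon)$; your extra step $\|V_{\mathrm{old}}\|_\infty \le R_{\max}/(1-\gamma)$, yielding $C_1 = 2/(1-\gamma)$ and $C_2 = 1+\gamma$, is exactly what the paper leaves implicit to reach the stated form $C_1 R_{\max} + C_2 \varepsilon$.
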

\begin{proof}
(Proof omitted for brevity, follows from triangle inequality on $\hat{A} = r + \gamma V - V$ and bound analysis in Section A.3.1).
\end{proof}

\begin{corollary}[Plasticity Control via Return Range]
For new goals $g \in p_{\mathrm{new}}$, if critics are fitted to MC returns bounded in $[G_{\min}, G_{\max}]$, let $G_{\mathrm{abs}} = \max(|G_{\min}|, |G_{\max}|)$. Then:
\[
M_{g_{\mathrm{new}}}(\pi^{\mathrm{new}}) \le 2 (1+\gamma) G_{\mathrm{abs}}.
\]
\end{corollary}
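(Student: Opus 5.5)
The plan is to interpret $M_{g_{\mathrm{new}}}(\pi^{\mathrm{new}})$, as the preceding paragraph indicates, as the supremum over $(s,a)\in\mathrm{supp}(d_g^{\pi^{\mathrm{new}}})$ of the magnitude of the V-only advantage estimator
\[
\hat A_g(s,a) = r(s,a,g) + \gamma \hat V(s',g) - \hat V(s,g),
\]
where $\hat V=V_{\phi,\theta_{\mathrm{MC}}}$ is the MC critic fitted to returns $G_t^g$. The bound then reduces to two uniform estimates, one on $|\hat V|$ and one on $|r|$, followed by the triangle inequality.

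\textbf{Step 1: bound the critic.} A squared-loss regressor onto MC targets is, at its population optimum, the conditional expectation $\mathbb{E}[G_t^g\mid s_t=s]$. This is a convex combination of values in $[G_{\min},G_{\max}]$, so $|\hat V(s,g)|\le G_{\mathrm{abs}}$ for all $(s,g)$. This step is the main obstacle, because a finitely trained network need not stay inside the target range. I would handle it in one of two ways:
\begin{enumerate}[label=(\roman*)]
\item state explicitly that the critic is the minimizer of $\mathcal{L}^V_{\mathrm{MC}}$; or
\item assume the critic output is clipped to $[G_{\min},G_{\max}]$, which costs nothing in squared error.
\end{enumerate}
Either way, the conclusion $\|\hat V\|_\infty\le G_{\mathrm{abs}}$ holds.

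\textbf{Step 2: bound the reward.} Use the return recursion $G_t = r_t + \gamma G_{t+1}$. This gives $r_t = G_t - \gamma G_{t+1}$, and hence
\[
|r_t| \le (1+\gamma)\,G_{\mathrm{abs}}.
\]
This holds for every reward appearing on a trajectory, and so on $\mathrm{supp}(d_g^{\pi^{\mathrm{new}}})$.

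\textbf{Step 3: combine.} By the triangle inequality,
\[
|\hat A_g(s,a)| \le |r| + \gamma|\hat V(s')| + |\hat V(s)| \le (1+\gamma)G_{\mathrm{abs}} + (1+\gamma)G_{\mathrm{abs}} = 2(1+\gamma)G_{\mathrm{abs}}.
\]
Taking the supremum over the support yields the stated bound on $M_{g_{\mathrm{new}}}(\pi^{\mathrm{new}})$. Since the bound is uniform in $g$, taking the expectation over $g\sim p_{\mathrm{new}}$ also gives the main-text form of Corollary~\ref{cor:mc_advantage_bound_main} for $M_{\mathrm{new}}$.

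If instead one wants the true advantage $A^{\pi^{\mathrm{old}}}$, the same argument applies. The only change is to note that $V^{\pi^{\mathrm{old}}}$ and $Q^{\pi^{\mathrm{old}}}$ are expectations of returns in $[G_{\min},G_{\max}]$, which gives the tighter bound $2G_{\mathrm{abs}}$; this is dominated by the stated one.
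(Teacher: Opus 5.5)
Your proposal is correct and follows the paper's proof essentially step for step. Like the paper, you apply the triangle inequality to $r+\gamma\hat V(s')-\hat V(s)$, bound the critic by $|\hat V|\le G_{\mathrm{abs}}$, and bound the reward by $|r_t|\le(1+\gamma)G_{\mathrm{abs}}$ using $r_t=G_t-\gamma G_{t+1}$. Your justification of the critic bound is more careful than the paper's bare assertion, with one caveat: option (i), the population minimizer, only bounds $\hat V$ on the support of the training data, not for all $(s,g)$. That is why the paper stresses that $\hat A$ is evaluated on trajectories collected by $\pi^{\mathrm{new}}$, whereas your option (ii), clipping, gives the bound everywhere.
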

\begin{proof}
Since $\hat{A}$ is evaluated on trajectories collected by $\pi^{\mathrm{new}}$ (the visiting policy), the values satisfy the MC bounds. The proof follows directly from the triangle inequality $|r + \gamma V' - V| \le |r| + \gamma|V'| + |V|$ and substituting $|V| \le G_{\mathrm{abs}}$.
\end{proof}
\subsection{Corollaries and Proofs: Controlling Advantage Magnitude}
\label{app:corollaries}
In Theorem~\ref{thm:unified_kl}, the stability and plasticity bounds depend linearly on the advantage magnitude $M_g(\pi^\mathrm{new})$. Here, we derive explicit bounds for $M_g$ based on the critic's approximation error (for old tasks) and the MC return range (for new tasks). These corollaries theoretically justify why minimizing critic error preserves stability and why bounding MC returns ensures safe plasticity.

\subsubsection{Bounds for Old Tasks (Stability)}

For old tasks, the advantage $\hat{A}_{g_\mathrm{old}}$ is estimated using learned value functions. We denote the true value functions of the anchored policy as $V_{\mathrm{old}}$ and $Q_{\mathrm{old}}$.

\begin{corollary}[V-only Path Bound]
\label{cor:v_only_bound}
Assume the reward is bounded by $|r(s,a,g)| \le R_{\max}$. Let the learned value function $V_\theta$ approximate the anchored policy's value $V_{\mathrm{old}}$ with a uniform error bound $\sup_{s} |V_\theta(s,g) - V_{\mathrm{old}}(s,g)| \le \varepsilon_V$. Let $\|V_{\mathrm{old}}\|_\infty$ denote the infinity norm of the true value function. Then:
\[
M_{g_{\mathrm{old}}} \le R_{\max} + (1+\gamma)\left( \|V_{\mathrm{old}}\|_\infty + \varepsilon_V \right).
\]
\end{corollary}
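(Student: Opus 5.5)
The plan is to read $M_{g_{\mathrm{old}}}$ the way the surrounding text does: as the supremum, over state–action pairs $(s,a)$ in the support of $d_{g_{\mathrm{old}}}^{\pi^{\mathrm{new}}}$, of the one-step V-only advantage estimate
\[
\hat A_{g_{\mathrm{old}}}(s,a) = r(s,a,g_{\mathrm{old}}) + \gamma V_\theta(s',g_{\mathrm{old}}) - V_\theta(s,g_{\mathrm{old}}),
\]
where $s'$ is the successor state. The bound should then follow pointwise from the triangle inequality. Taking the supremum, and if desired the expectation over $g_{\mathrm{old}}\sim p_{\mathrm{old}}$, preserves it, because the right-hand side does not depend on $(s,a)$ or on the goal. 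This last point gives the version stated in Corollary~\ref{cor:v_only_bound_main}.

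The key steps, in order, are as follows.

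\textbf{First}, bound the three terms of $\hat A$ separately:
\[
|\hat A_{g_{\mathrm{old}}}(s,a)| \le |r(s,a,g_{\mathrm{old}})| + \gamma |V_\theta(s',g_{\mathrm{old}})| + |V_\theta(s,g_{\mathrm{old}})|.
\]

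\textbf{Second}, use the reward assumption $|r|\le R_{\max}$ on the first term.

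\textbf{Third}, for each value term, insert the anchor: $|V_\theta(x,g)| \le |V_\theta(x,g)-V_{\mathrm{old}}(x,g)| + |V_{\mathrm{old}}(x,g)| \le \varepsilon_V + \|V_{\mathrm{old}}\|_\infty$. This holds uniformly in $x\in\{s,s'\}$ because $\varepsilon_V$ is defined as a supremum over all states. The uniformity matters, since $s'$ need not lie in the support of the old occupancy.

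\textbf{Finally}, collect terms: $R_{\max} + \gamma(\varepsilon_V+\|V_{\mathrm{old}}\|_\infty) + (\varepsilon_V+\|V_{\mathrm{old}}\|_\infty) = R_{\max} + (1+\gamma)(\|V_{\mathrm{old}}\|_\infty+\varepsilon_V)$.

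The main obstacle is conceptual rather than computational. Definition~\ref{def:advantage_magnitude} defines $M_g$ through the true advantage $A^{\pi^{\mathrm{old}}}$, whereas the corollary is about the critic-based estimate. I would say explicitly that, following the paragraph preceding the corollaries, $M_{g_{\mathrm{old}}}$ is evaluated with the estimated advantage $\hat A$.

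As a sanity check, the same bound also holds for the true advantage. Using $A^{\pi^{\mathrm{old}}}(s,a) = \mathbb{E}_{s'}[r + \gamma V_{\mathrm{old}}(s') - V_{\mathrm{old}}(s)]$, one gets $|A^{\pi^{\mathrm{old}}}| \le R_{\max} + (1+\gamma)\|V_{\mathrm{old}}\|_\infty$, which is dominated by the claimed bound since $\varepsilon_V\ge 0$. So the statement is valid under either reading. If stochastic transitions are an issue for the estimator, the same triangle argument applies after taking $\mathbb{E}_{s'}$, because expectation preserves the uniform bound.
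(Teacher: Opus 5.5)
Your proposal is correct and follows the paper's proof. Both apply the triangle inequality to $\hat A = r + \gamma V_\theta(s') - V_\theta(s)$, bound each value term by $|V_\theta|\le \|V_{\mathrm{old}}\|_\infty+\varepsilon_V$ uniformly over states, and collect terms; the paper writes the successor term as $\gamma\mathbb{E}_{s'}[V_\theta(s')]$ and uses Jensen, which your closing remark covers. Your extra check that the true advantage $A^{\pi^{\mathrm{old}}}$ obeys the same (tighter) bound is worth keeping, because Definition~\ref{def:advantage_magnitude} is stated for the true advantage while the paper's argument bounds only the estimator.
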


\begin{proof}
Under the V-only path, the advantage is estimated as $\hat{A}(s,a) = r(s,a) + \gamma \mathbb{E}_{s'}[V_\theta(s')] - V_\theta(s)$. By the triangle inequality and Jensen's inequality ($|\mathbb{E}[X]| \le \mathbb{E}[|X|]$):
\begin{align*}
|\hat{A}(s,a)| &\le |r(s,a)| + \gamma \mathbb{E}_{s'}[|V_\theta(s')|] + |V_\theta(s)|.
\end{align*}
Using the approximation assumption, for any state $s$, $|V_\theta(s)| \le |V_{\mathrm{old}}(s)| + \varepsilon_V \le \|V_{\mathrm{old}}\|_\infty + \varepsilon_V$. Substituting these bounds:
\begin{align*}
|\hat{A}(s,a)| &\le R_{\max} + \gamma (\|V_{\mathrm{old}}\|_\infty + \varepsilon_V) + (\|V_{\mathrm{old}}\|_\infty + \varepsilon_V) \\
&= R_{\max} + (1+\gamma)(\|V_{\mathrm{old}}\|_\infty + \varepsilon_V).
\end{align*}
Since this bound holds for all $(s,a)$, it holds for the supremum over the support of any visiting policy $\pi^\mathrm{new}$, thus proving the corollary.
\end{proof}

\begin{corollary}[Q-only Path Bound]
\label{cor:q_only_bound}
Assume the learned Q-function $Q_\phi$ approximates the anchored Q-function $Q_{\mathrm{old}}$ with error $\sup_{s,a} |Q_\phi(s,a,g) - Q_{\mathrm{old}}(s,a,g)| \le \varepsilon_Q$. Let $\|Q_{\mathrm{old}}\|_\infty$ be the supremum of the true Q-values. Then:
\[
M_{g_{\mathrm{old}}} \le 2\left( \|Q_{\mathrm{old}}\|_\infty + \varepsilon_Q \right).
\]
\end{corollary}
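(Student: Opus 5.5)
We follow the template of Corollary~\ref{cor:v_only_bound}, replacing the V-only estimator by its Q-only counterpart. Under the Q-only path, the advantage on old goals is estimated directly from the learned action-value function as
\[
\hat{A}(s,a) = Q_\phi(s,a,g) - \hat V_\phi(s,g), \qquad \hat V_\phi(s,g) \triangleq \mathbb{E}_{a'\sim\pi(\cdot|s,g)}[Q_\phi(s,a',g)],
\]
where $\pi$ is the anchored policy. This is the same form as $A^{\pi}=Q^{\pi}-V^{\pi}$ in the Preliminaries, with $V^\pi(s,g)=\mathbb{E}_{a'\sim\pi}[Q^\pi(s,a',g)]$. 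The bound should come from a triangle inequality on this difference plus a uniform bound on $|Q_\phi|$. No reward term appears, because the Q-only path involves no one-step bootstrap.

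\textbf{Step 1.} Bound the learned Q-values uniformly. For every $(s,a)$ and old goal $g$, the approximation assumption gives
\[
|Q_\phi(s,a,g)| \le |Q_{\mathrm{old}}(s,a,g)| + \varepsilon_Q \le \|Q_{\mathrm{old}}\|_\infty + \varepsilon_Q.
\]

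\textbf{Step 2.} Bound the baseline term. Since $\hat V_\phi(s,g)$ is an average of $Q_\phi(s,\cdot,g)$ under a probability distribution, Jensen's inequality ($|\mathbb{E}X|\le\mathbb{E}|X|$) gives
\[
|\hat V_\phi(s,g)| \le \|Q_{\mathrm{old}}\|_\infty + \varepsilon_Q.
\]
This holds no matter which policy is used for the averaging. The argument is therefore robust to whether the baseline uses $\pi^{\mathrm{old}}$ or $\pi^{\mathrm{new}}$.

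\textbf{Step 3.} Combine the two bounds:
\[
|\hat A(s,a)| \le |Q_\phi(s,a,g)| + |\hat V_\phi(s,g)| \le 2\big(\|Q_{\mathrm{old}}\|_\infty+\varepsilon_Q\big).
\]
This holds for all $(s,a)$, so it holds in particular on the supremum over $\mathrm{supp}(d_g^{\pi^{\mathrm{new}}})$ in Definition~\ref{def:advantage_magnitude}. Taking the expectation over $g\sim p_{\mathrm{old}}$ preserves a constant bound, which yields the stated bound on $M_{g_{\mathrm{old}}}$ and hence on $M_{\mathrm{old}}$.

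\textbf{Main obstacle.} The difficulty is interpretive rather than technical. Definition~\ref{def:advantage_magnitude} is stated for the true advantage $A^\pi$, whereas the corollary concerns the estimated $\hat A$. I would follow the convention already adopted in Corollary~\ref{cor:v_only_bound} and bound $M_g$ through $\hat A$.

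If one instead wants the bound for the true $A^{\pi^{\mathrm{old}}}$, the same two-term argument applies with $\varepsilon_Q=0$, giving $2\|Q_{\mathrm{old}}\|_\infty$. This is dominated by the stated bound, so the corollary holds under either reading.

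A sharper estimate is also available. Writing
\[
\hat A - A = (Q_\phi - Q_{\mathrm{old}}) - (\hat V_\phi - V_{\mathrm{old}})
\]
gives $|\hat A| \le |A| + 2\varepsilon_Q$. This is not needed for the stated bound.
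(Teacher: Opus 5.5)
Your proposal is correct and follows essentially the same route as the paper: a triangle inequality on $\hat A(s,a)=Q_\phi(s,a,g)-\mathbb{E}_{a'\sim\pi}[Q_\phi(s,a',g)]$ combined with the uniform bound $|Q_\phi|\le\|Q_{\mathrm{old}}\|_\infty+\varepsilon_Q$. The paper bounds the baseline by $\sup_{a'}|Q_\phi(s,a')|$ where you use Jensen, which amounts to the same step; your added remark that the bound also holds for the true advantage $A^{\pi^{\mathrm{old}}}$ in the definition of $M_g$ addresses an estimated-versus-true-advantage ambiguity that the paper leaves implicit.
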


\begin{proof}
The Q-only advantage is $\hat{A}(s,a) = Q_\phi(s,a) - \mathbb{E}_{a'\sim\pi}[Q_\phi(s,a')]$. By the triangle inequality:
\[
|\hat{A}(s,a)| \le |Q_\phi(s,a)| + \sup_{a'} |Q_\phi(s,a')| \le 2 \sup_{a'} |Q_\phi(s,a')|.
\]
Using the error bound $|Q_\phi(s,a)| \le \|Q_{\mathrm{old}}\|_\infty + \varepsilon_Q$, we obtain:
\[
|\hat{A}(s,a)| \le 2(\|Q_{\mathrm{old}}\|_\infty + \varepsilon_Q).
\]
This completes the proof.
\end{proof}

\subsubsection{Bounds for New Tasks (Plasticity)}

For new tasks, where no pre-trained critic exists, the advantage is estimated using MC returns $G_t$.

\begin{corollary}[MC Estimation Controls Plasticity]
\label{cor:mc_bound}
Assume the value functions are fitted to MC returns bounded within $[G_{\min}, G_{\max}]$. Let $G_{\mathrm{abs}} \triangleq \max(|G_{\min}|, |G_{\max}|)$. Then the advantage magnitude $M_{g_{\mathrm{new}}}$ is bounded by:
\begin{enumerate}
    \item \textbf{V-only path:} $M_{g_{\mathrm{new}}} \le 2(1+\gamma) G_{\mathrm{abs}}$.
    \item \textbf{Q-only path:} $M_{g_{\mathrm{new}}} \le 2 G_{\mathrm{abs}}$.
\end{enumerate}
\end{corollary}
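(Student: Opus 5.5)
The plan is to bound the estimated advantage pointwise on every state--action pair visited by $\pi^{\mathrm{new}}$, and then pass to the supremum and the expectation over goals, exactly as in Corollaries~\ref{cor:v_only_bound} and~\ref{cor:q_only_bound}. The key observation, already used in the short proof of the plasticity corollary above, is this. Because the critics are fitted to MC returns lying in $[G_{\min},G_{\max}]$ and evaluated on trajectories collected by the visiting policy, every fitted value lies in the convex hull of the targets. We therefore take as the working assumption that $|V(s,g_{\mathrm{new}})|\le G_{\mathrm{abs}}$ and $|Q(s,a,g_{\mathrm{new}})|\le G_{\mathrm{abs}}$ on the relevant support. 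This holds for an exact conditional-mean fit, since a conditional expectation of quantities in $[G_{\min},G_{\max}]$ stays there. For an approximate fit it is imposed, for instance by clipping the critic output to the return range.

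For the reward, note that a single reward is itself a difference of consecutive returns, $r(s_t,a_t,g)=G_t-\gamma G_{t+1}$. Hence $|r|\le (1+\gamma)G_{\mathrm{abs}}$, with no separate $R_{\max}$ assumption needed.

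\textbf{V-only path.} Write $\hat A(s,a)=r(s,a)+\gamma\,\mathbb{E}_{s'}[V(s')]-V(s)$. Using the triangle inequality and $|\mathbb{E}X|\le\mathbb{E}|X|$,
\[
|\hat A(s,a)|\le |r|+\gamma G_{\mathrm{abs}}+G_{\mathrm{abs}}\le (1+\gamma)G_{\mathrm{abs}}+(1+\gamma)G_{\mathrm{abs}}=2(1+\gamma)G_{\mathrm{abs}}.
\]

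\textbf{Q-only path.} Write $\hat A(s,a)=Q(s,a)-\mathbb{E}_{a'\sim\pi}[Q(s,a')]$. If both terms are only bounded in absolute value, the triangle inequality gives $2G_{\mathrm{abs}}$. The sharper route is to observe that both terms lie in $[G_{\min},G_{\max}]$, so their difference is at most $G_{\max}-G_{\min}\le 2G_{\mathrm{abs}}$; either argument suffices.

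In both cases the pointwise bound holds uniformly over $(s,a)\in\mathrm{supp}(d_{g}^{\pi^{\mathrm{new}}})$. It therefore bounds the supremum $M_{g_{\mathrm{new}}}(\pi^{\mathrm{new}})$, and since the bound is constant in $g$, it survives the expectation over $p_{\mathrm{new}}$, which gives $M_{\mathrm{new}}$.

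The main obstacle is justifying that the fitted $V$ and $Q$ really stay within $G_{\mathrm{abs}}$ at every point where they are queried, including successor states $s'$. One also has to reconcile this with Definition~\ref{def:advantage_magnitude}, which concerns the true advantage $A^{\pi^{\mathrm{old}}}$ rather than the estimate $\hat A$. I would handle this in two steps. First, state explicitly that, as in the preceding corollaries, $M_g$ is understood here through the estimator $\hat A$. Second, rest the range claim either on a clipped critic or on the exact-regression conditional-mean argument. If one wants to argue about the true advantage instead, the same bounds follow provided the true returns of $\pi^{\mathrm{old}}$ also lie in $[G_{\min},G_{\max}]$, since true $V$ and $Q$ are expectations of such returns.
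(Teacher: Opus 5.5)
Your proposal is correct and follows the paper's proof essentially step for step. The paper also asserts $|V|,|Q|\le G_{\mathrm{abs}}$ on visited states because the critics are fitted to bounded returns, bounds the reward via $r_t=G_t-\gamma G_{t+1}$ to get $|r_t|\le(1+\gamma)G_{\mathrm{abs}}$, and applies the triangle inequality to $r+\gamma V(s')-V(s)$ and to $Q(s,a)-\mathbb{E}_{a'}[Q(s,a')]$; your additions (justifying the fitted-critic range via conditional means or clipping, noting that $M_g$ is read through $\hat A$ rather than the true $A^{\pi^{\mathrm{old}}}$, and the sharper $G_{\max}-G_{\min}$ bound for the Q-only path) go beyond what the paper writes but do not change the argument.
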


\begin{proof}
Since the critics are trained on bounded returns, we have $|V(s)| \le G_{\mathrm{abs}}$ and $|Q(s,a)| \le G_{\mathrm{abs}}$ for all states visited.

\textbf{1. V-only path:}
Recall that the realized return satisfies $G_t = r_t + \gamma G_{t+1}$. Rearranging for reward implies $r_t = G_t - \gamma G_{t+1}$, thus $|r_t| \le |G_t| + \gamma |G_{t+1}| \le (1+\gamma)G_{\mathrm{abs}}$.
Substituting this reward bound and the value bound into the advantage definition:
\begin{align*}
|\hat{A}(s,a)| &\le |r| + \gamma |V(s')| + |V(s)| \\
&\le (1+\gamma)G_{\mathrm{abs}} + \gamma G_{\mathrm{abs}} + G_{\mathrm{abs}} \\
&= 2(1+\gamma)G_{\mathrm{abs}}.
\end{align*}

\textbf{2. Q-only path:}
Directly applying the triangle inequality to $\hat{A}(s,a) = Q(s,a) - \mathbb{E}[Q(s,\cdot)]$:
\[
|\hat{A}(s,a)| \le |Q(s,a)| + \sup_{a'} |Q(s,a')| \le G_{\mathrm{abs}} + G_{\mathrm{abs}} = 2 G_{\mathrm{abs}}.
\]
\end{proof}

\subsubsection{Summary of Theoretical Implications}
These corollaries provide the mechanism for the Unified Stability-Plasticity bounds in Theorem~\ref{thm:unified_kl}:
\begin{itemize}
    \item \textbf{Stability:} Corollaries~\ref{cor:v_only_bound} and~\ref{cor:q_only_bound} imply that stability on old tasks is governed by the quality of value function approximation ($\varepsilon_V, \varepsilon_Q$). A well-optimized critic minimizes $M_{\mathrm{old}}$, thereby tightening the trust region and preventing catastrophic forgetting.
    \item \textbf{Plasticity:} Corollary~\ref{cor:mc_bound} ensures that even during the exploration of new tasks (where $\pi^\mathrm{new}$ may diverge significantly from $\pi^\mathrm{old}$), the advantage magnitude remains strictly bounded by the return range. This prevents destabilizing updates solely due to estimation variance, facilitating safe adaptation.
\end{itemize}

\subsection{Objectives and Total Training Loss}
\label{Objectives and Total Training Loss}
\paragraph{Policy Divergence Constraint for $D_{\mathrm{new}}$ and $D_{\mathrm{old}}$}
The trust region in Eq.~\ref{PPO_loss} bounds $D_{\mathrm{new}}$. To constrain $D_{\mathrm{old}}$ per Theorem~\ref{thm:unified_kl}, we add goal-conditioned behavior cloning on $\mathcal{B}_{\text{old}}$:
\begin{equation}
\label{KL_constrain}
\mathcal{L}_{\mathrm{KL}}(\theta) = \mathbb{E}_{(s,a,g)\sim\mathcal{B}_{\text{old}}} \left[ \|\pi_\theta(\cdot\mid s,g) - a\|^2 \right].
\end{equation}

\paragraph{Total Training Loss: Lagrangian Relaxation}
Recall our constrained continual learning objective:
\begin{equation}
\begin{aligned}
\max_{\pi^k} &\quad J_{g_k}(\pi^k; \mathcal{T}_k) \\
\text{s.t.} &\quad J_{g_{k-1}}(\pi^k; \mathcal{T}_{k-1}) \ge J_{g_{k-1}}(\pi^{k-1}; \mathcal{T}_{k-1}) - \delta.
\end{aligned}
\label{eq:constrained_obj}
\end{equation}
We convert this into an unconstrained optimization via Lagrangian relaxation. By Theorem~\ref{thm:unified_kl}, the constraint can be decomposed into bounds on policy divergence ($D_{\mathrm{old}}, D_{\mathrm{new}}$) and value accuracy ($M_{\mathrm{old}}, M_{\mathrm{new}}$). This yields the Lagrangian:
\begin{equation}
\begin{aligned}
\mathcal{L}(\pi, \alpha, \beta_V, \eta) = &\underbrace{-J_{g_k}(\pi^k)}_{\text{maximize new return}} \\
&+ \alpha \cdot \underbrace{D_{\mathrm{old}}(\pi^k, \pi^{k-1})}_{\text{policy divergence penalty}} \\
&+ \beta_V \cdot \underbrace{M_{\mathrm{old}}}_{\text{old value error penalty}} \\
&+ \eta \cdot \underbrace{M_{\mathrm{new}}}_{\text{new value error penalty}},
\end{aligned}
\end{equation}
where $\alpha, \beta_V, \eta \ge 0$ are Lagrange multipliers enforcing the constraint.

We instantiate this Lagrangian with practical loss terms:
\begin{equation}
\mathcal{L}_{\mathrm{total}} = \mathcal{L}_{\mathrm{PPO}} + \alpha\,\mathcal{L}_{\mathrm{KL}} + \beta_V\,\mathcal{L}^{V}_{\mathrm{GCV}} + \eta\,\mathcal{L}^{V}_{\mathrm{MC}},
\label{eq:loss_total}
\end{equation}
where:
\begin{itemize}
    \item $\mathcal{L}_{\mathrm{PPO}} \approx -J_{g_k}(\pi^k)$: PPO objective maximizing new-task return with $D_{\mathrm{new}}$ trust region;
    \item $\mathcal{L}_{\mathrm{KL}} \propto D_{\mathrm{old}}$: behavior cloning loss bounding old-task policy divergence;
    \item $\mathcal{L}^{V}_{\mathrm{GCV}} \propto M_{\mathrm{old}}$: frozen critic loss constraining old-task value drift;
    \item $\mathcal{L}^{V}_{\mathrm{MC}} \propto M_{\mathrm{new}}$: trainable critic loss ensuring new-task value accuracy.
\end{itemize}
Thus, Eq.~\eqref{eq:loss_total} is a Lagrangian relaxation of Eq.~\eqref{eq:constrained_obj}, with hyperparameters $\alpha, \beta_V, \eta$ serving as Lagrange multipliers that balance stability (preserving old-task performance) and plasticity (learning new tasks).

\section{Implementation Details}
\label{appendix:impl}

\subsection{Architecture and Value Function Design}

\paragraph{Goal-Conditioned Value Formulation}
The value network is conditioned on language goal $g$ by concatenating language token embeddings with state representations (extracted from the VLA backbone) before feeding through an MLP. This design ensures sensitivity to language-conditioned goals, inheriting smoothness from the shared embedding geometry. After task transition, the value network's sensitivity to state actions automatically adjusts through the goal-conditioned mechanism.

\paragraph{Dual-Critic Implementation.}
The policy consists of three critics on shared backbone $\phi$:
\begin{itemize}
\item Action critic $\omega$.
\item MC critic $\theta_{\text{MC}}$: trained on new task returns, initialized randomly for task 1
\item GCV critic $\theta_{\text{GCV}}$: copied from previous task's MC critic at task transition, then frozen
\end{itemize}

At task $k=1$, standard RL training optimizes $\mathcal{L}_{\mathrm{PPO}} + \eta\,\mathcal{L}^{V}_{\mathrm{MC}}$. Upon transition to task $k>1$, we freeze the converged MC critic as the new GCV critic and initialize a fresh MC critic.


\paragraph{PPO Loss Implementation and KL Constraint}
\label{PPO appendix}
Following \citet{schulman2017proximal}, we optimize the clipped surrogate objective for each task to ensure stable policy updates. Specifically, we constrain the policy update by enforcing a KL divergence penalty between the updated policy $\pi_\theta$ and the reference policy $\pi'$, thereby adhering to the trust region principle.
Building upon this framework, the surrogate constraint for policy divergence constrains $D_{\mathrm{new}}$. The overall PPO loss is then expressed as:
\begin{equation}
\label{PPO_loss}
\mathcal{L}_{\text{PPO}}(\theta) = \mathbb{E}_{(s,a,g)\sim\mathcal{B}_{\mathrm{new}}}\Bigg[ 
    \min\bigg( r_{\theta}(s,a,g) \hat{A}_\theta(s,a,g), 
    \text{clip}\big(r_{\theta}(s,a,g), 1-\epsilon, 1+\epsilon\big) \hat{A}_\theta(s,a,g) \bigg) \Bigg],
\end{equation}
where $\epsilon$ is the \textbf{clipping hyperparameter} that restricts the probability ratio $r_{\theta}(s,a,g) = \frac{\pi_\theta(a|s,g)}{\pi_{\mathrm{old}}(a|s,g)}$. To strictly enforce the trust region, we implement an \textbf{early stopping criterion} based on the approximate KL divergence:
\begin{equation}
\label{PPO_KL_constraint}
D_{\mathrm{KL}}(\pi' \| \pi_\theta) \approx \mathbb{E}_{(s,g)\sim\mathcal{B}_{\mathrm{new}}} \left[ \log \frac{\pi'(a|s,g)}{\pi_\theta(a|s,g)} \right] \leq d_{\text{targ}},
\end{equation}
where $d_{\text{targ}}$ is the \textbf{target KL threshold}. The advantage $\hat{A}_\theta$ is computed using Generalized Advantage Estimation (GAE) \citep{schulman2015high}:
\begin{equation}
\label{GAE}
\hat{A}_t = \sum_{k=0}^{\infty} (\gamma \lambda)^k \delta_{t+k}^V,
\end{equation}
where $\gamma \in [0, 1]$ is the \textbf{discount factor}, $\lambda \in [0, 1]$ is the \textbf{GAE smoothing parameter}, and $\delta_t^V = r_t + \gamma V(s_{t+1}) - V(s_t)$ denotes the TD error.
\subsection{Action-Value (Q-based) Implementation}
\label{Q_based Implementation}
We extend the Dual-Critic Architecture and Goal-Conditioned Value Formulation detailed in the methodology to action-value functions. This extension applies the same principles of independent dimension management to $Q(s,a,g)$, enabling precise control over stability and plasticity in the action-value space.

\paragraph{Goal-Conditioned Action-Value Formulation}
Consistent with the value function design, we leverage the shared Vision-Language Model's embedding as the global state representation $\phi(s,g)$. To parameterize the action-value function $Q(s,a,g)$, we concatenate the continuous action vector $a$ with the language-grounded state embedding $\phi(s,g)$ before the Multi-Layer Perceptron processing. 
This joint encoding ensures that $Q$-values inherit the metric smoothness and semantic understanding from the pretrained VLM embedding space, addressing language-goal tracking issues while incorporating action information.

\paragraph{Dual-Critic Architecture via Action State Value}
To operationalize the control of $M_{\mathrm{old}}$ and $M_{\mathrm{new}}$ within the Q-learning framework, we employ the Dual-Critic Architecture with critic decoupling:
\textbf{1) Frozen GCV Q-critic $\theta_{\text{GCV}}$.} Initialized from the old task's converged Q-network and frozen during new task training. It serves as the anchor for stability, providing reference values $Q_{\text{old}}(s,a,g)$ to regularize backbone updates on historical data.
\textbf{2) Trainable MC Q-critic $\theta_{\text{MC}}$.} Updated on new task trajectories using MC returns. This critic remains plastic to capture the reward structure of the new task, naturally bounding $M_{\mathrm{new}}$ via the environment's reward limits.

\paragraph{Q-based Goal-Conditioned Value Consistency (Q-GCV)}
The Q-based GCV mechanism enforces consistency between the current backbone's predictions (via the frozen GCV critic) and the historical Q-values stored in the replay buffer $\mathcal{B}_{\text{old}}$. Unlike the value function which targets MC returns, the Q-function targets the specific state-action values $Q_{\text{old}}$ recorded from previous interactions. The loss is formulated as the Mean Squared Error (MSE):
{\small
\begin{equation}
\label{eq:q_gcv}
\mathcal{L}^Q_{\text{GCV}}(\phi) = \beta_Q \mathbb{E}_{(s,a,g)\sim\mathcal{B}_{\text{old}}} \left[ \left\| Q_{\phi, \theta_{\text{GCV}}}(s,a,g) - Q_{\text{old}} \right\|^2 \right].
\end{equation}
}
This objective constrains the shared backbone $\phi$ to maintain representations that, when passed through $\theta_{\text{GCV}}$, reconstruct the historical action values, thereby minimizing performance degradation on old tasks ($M_{\mathrm{old}}$).

\paragraph{Q-based Monte Carlo Learning}
To exploit the plasticity dimension ($M_{\mathrm{new}}$), the Trainable MC Q critic is optimized to fit the returns $G_t^{g_{\text{new}}}$ from the current policy on the new task:
{\small
\begin{equation}
\mathcal{L}^Q_{\text{MC}}(\phi, \theta_{\text{MC}}) = \mathbb{E}_{(s,a,g)\sim\mathcal{B}_{\text{new}}} \left[ \left\| Q_{\phi, \theta_{\text{MC}}}(s,a,g) - G_t^{g_{\text{new}}} \right\|^2 \right].
\end{equation}
}
This allows the shared backbone to learn features necessary for the new task without being overly inhibited by the frozen GCV critic, as the optimization of $\theta_{\text{MC}}$ drives the embedding towards relevant new structures.

\subsection{Training Procedure and Hyperparameters}
All experiments are conducted based on OpenVLA-OFT~\citep{kim2025fine}, where the official pretrained checkpoints are adopted as the base model and LoRA adapters are applied for efficient task adaptation. Our implementation is built upon the RIPT-VLA codebase~\citep{tan2025interactiveposttrainingvisionlanguageactionmodels}, and all unspecified training configurations strictly follow the settings reported in prior work.
Training is performed with a LoRA rank of 32, and the detailed hyperparameter configuration is summarized in Tab.~\ref{tab_app:hyperparameter}. Unless otherwise stated, all hyperparameters are kept consistent across experiments to ensure fair comparison.
All experiments are executed on the following hardware platform:
\textbf{CPU:} Intel(R) Xeon(R) Platinum 8358 @ 2.60GHz;
\textbf{GPU:} NVIDIA A100-SXM4-80GB.


\begin{table}[ht]
\centering
\caption{
More performance comparison on the multi-task learning scenario. 
Specific task settings in Appendix~\ref{para:task3}.
$\uparrow$ denotes higher is better, while $\downarrow$ denotes lower is better. The \textbf{best} and \underline{second best} results are highlighted accordingly.}
\label{tab:object2_results}
\begin{tabular}{lccc}
\toprule
\textbf{Method} & \textbf{FAR} $(\uparrow)$ & \textbf{BWT} $(\uparrow)$ & \textbf{FT} $(\uparrow)$ \\
\midrule
SL~\citep{liu2023libero}                & \underline{0.60} & \underline{0.00} & \underline{-0.70} \\
MTL~\citep{liu2023libero}               & 0.54          & \textbf{0.17}   & \underline{-0.70} \\
ER~\citep{lopez2017gradient}             & 0.52         & -0.05           & \textbf{-0.65} \\
LWF  ~\citep{li2017learning}           & 0.43        & \underline{0.00} & \textbf{-0.65} \\
\midrule
CRL-VLA (V)     & \textbf{0.63} & -0.10           & \underline{-0.70} \\
CRL-VLA (Q)     & 0.33          & -0.45           & \underline{-0.70} \\
\bottomrule
\end{tabular}
\end{table}

\begin{table*}[htbp]
\caption{Comparison of continual learning metrics under different parameter settings (Only $\mathcal{L}^{V}_{\mathrm{GCV}}$, Only $\mathcal{L}^{V}_{\mathrm{GCV}}$, and KL), where all experiments are conducted on the Task-4 benchmark (Appendix~\ref{para:task4}). }

\label{tab:results}
\vskip 0.15in
\begin{center}
\begin{small}
\begin{sc}
\begin{tabular}{lcccc}
\toprule
Parameter Value & FAR $(\uparrow)$ & BWT $(\uparrow)$ & Forgetting $(\downarrow)$ & FT $(\uparrow)$ \\
\midrule
\textbf{$\mathcal{L}^{Q}_{\mathrm{GCV}}$} & & & & \\
\midrule
0.00001 & \underline{0.97} & -0.02 & \underline{0.02} & -0.13 \\
0.0001  & 0.94 & -0.05 & 0.05 & \underline{-0.09} \\
0.001   & 0.93 & -0.05 & 0.05 & \textbf{-0.06} \\
0.01    & \textbf{0.98} & \textbf{0.02} & \textbf{0.00} & \underline{-0.09} \\
0.1     & 0.93 & \underline{0.00} & \underline{0.02} & \textbf{-0.06} \\
\midrule
\textbf{$\mathcal{L}^{V}_{\mathrm{GCV}}$} & & & & \\
\midrule
0.00001 & 0.91 & -0.05 & 0.06 & -0.06 \\
0.0001  & 0.91 & \textbf{0.00} & \textbf{0.00} & -0.19 \\
0.001   & \textbf{0.98} & \textbf{0.00} & \textbf{0.00} & -0.06 \\
0.01    & \underline{0.94} & -0.06 & 0.06 & \textbf{-0.03} \\
0.1     & 0.93 & \underline{-0.03} & \underline{0.02} & \underline{-0.06} \\
\midrule
\textbf{KL} & & & & \\
\midrule
0        & 0.91 & -0.03 & 0.03 & \textbf{-0.03} \\
0.000001 & 0.95 & \underline{-0.02} & \textbf{0.02} & -0.09 \\
0.00001  & \textbf{0.98} & \underline{-0.02} & \textbf{0.02} & \underline{-0.06} \\
0.0001   & 0.95 & \underline{-0.02} & \textbf{0.02} & -0.11 \\
0.001    & 0.91 & -0.05 & 0.05 & \textbf{-0.03} \\
0.01     & \underline{0.96} & \textbf{0.02} & \textbf{0.02} & \underline{-0.06} \\
0.1      & 0.90 & \underline{-0.02} & 0.06 & -0.09 \\
\bottomrule
\end{tabular}
\end{sc}
\end{small}
\end{center}
\vskip -0.1in
\end{table*}

\begin{table*}[htbp]
\caption{Ablation study on the effects of the $\mathcal{L}^{V}_{\mathrm{MC}}$ loss coefficient and the $\mathcal{L}^{Q}_{\mathrm{MC}}$ loss coefficient on performance and stability. All experiments are conducted on the Task-4 benchmark (Appendix~\ref{para:task4}).}
\label{tab:ablation_study}
\vskip 0.15in
\begin{center}
\begin{small}
\begin{sc}
\begin{tabular}{lcccc}
\toprule
Parameter Coef & FAR $(\uparrow)$ & BWT $(\uparrow)$ & Forgetting $(\downarrow)$ & FT $(\uparrow)$ \\
\midrule
\textbf{$\mathcal{L}^{V}_{\mathrm{MC}}$} & & & & \\
\midrule
10    & \underline{0.93} & \textbf{0.00}  & \textbf{0.02} & \underline{-0.06} \\
1     & \textbf{0.95}    & \underline{-0.02} & \textbf{0.02} & \textbf{-0.03} \\
0.1   & \underline{0.93} & -0.03          & 0.03          & \underline{-0.06} \\
0.01  & 0.90             & -0.05          & 0.05          & -0.13 \\
0.001 & 0.85             & -0.08          & 0.08          & -0.13 \\
\midrule
\textbf{$\mathcal{L}^{Q}_{\mathrm{MC}}$} & & & & \\
\midrule
10    & \underline{0.98} & \textbf{0.05} & \textbf{0.00} & \underline{-0.06} \\
1     & \textbf{0.99}    & \textbf{0.05} & \textbf{0.00} & \textbf{0.00} \\
0.1   & 0.94             & \underline{0.03} & \textbf{0.00} & \underline{-0.06} \\
0.01  & 0.85             & -0.06          & 0.09          & \underline{-0.06} \\
0.001 & 0.92             & -0.03          & \underline{0.08} & \underline{-0.06} \\
\bottomrule
\end{tabular}
\end{sc}
\end{small}
\end{center}
\vskip -0.1in
\end{table*}

\begin{table*}[htbp]
\centering
\caption{Ablation study of the Goal-Conditioned Value (GCV) network on model performance and stability. All experiments are conducted on the Task-4 benchmark (Appendix~\ref{para:task4}). Enabling GCV leads to higher success rates (FAR) and significantly reduced forgetting.}
\label{tab:gcv_ablation}
\begin{small}
\begin{sc}
\begin{tabular}{lcccc}
\toprule
Used GCV & FAR $(\uparrow)$ & BWT $(\uparrow)$ & Forgetting $(\downarrow)$ & Forward Transfer $(\uparrow)$ \\
\midrule
False & 0.29 & -0.06 & 0.06 & -0.72 \\
True  & \textbf{0.31} & \textbf{-0.02} & \textbf{0.02} & \textbf{-0.70} \\
\bottomrule
\end{tabular}
\end{sc}
\end{small}
\end{table*}


\section{Task and Experiment Details}
\label{appendix:Task and Experiment Details}
To systematically evaluate the robustness and scalability of our method, we construct multiple benchmarks by randomly sampling tasks from a shared task pool. Each benchmark contains a different number of object manipulation tasks, serving as controlled evaluation settings with varying task complexity and combinatorial difficulty. All benchmarks are fixed across methods and runs to ensure fair comparison.

Specifically, we evaluate our method on a subset of the \textbf{LIBERO} benchmark suite, consisting of tasks randomly sampled from the original collection to encompass various task cardinalities. We denote these benchmarks as \textbf{Task-1} through \textbf{Task-4}. Each benchmark comprises multiple task sets defined by natural language directives, primarily focusing on robotic pick-and-place behaviors. This setup allows us to rigorously assess the model's instruction-following capabilities and its generalization across diverse task scales.
we next detail the language instructions used for each benchmark. 
Each task consists of one or more natural language directives.
\subsection{Task-1 Benchmark}\label{para:task1}
\begin{itemize}
    \item \textit{Task 0}: ``pick up the black bowl from table center and place it on the plate''
    \item \textit{Task 1}: ``pick up the black bowl next to the ramekin and place it on the plate''
    \item \textit{Task 2}: ``pick up the black bowl on the cookie box and place it on the plate''
\end{itemize}

\subsection{Task-2 Benchmark}\label{para:task2}
\begin{itemize}
    \item \textit{Task 0}: ``pick up the orange juice and place it in the basket'', ``pick up the chocolate pudding and place it in the basket'', ``pick up the bbq sauce and place it in the basket''
    \item \textit{Task 1}: ``pick up the cream cheese and place it in the basket'', ``pick up the milk and place it in the basket'', ``pick up the tomato sauce and place it in the basket''

\end{itemize}

\subsection{Task-3 Benchmark}\label{para:task3}
\begin{itemize}
    \item \textit{Task 0}: ``pick up the bbq sauce and place it in the basket'', ``pick up the chocolate pudding and place it in the basket''
    \item \textit{Task 1}: ``pick up the ketchup and place it in the basket'', ``pick up the salad dressing and place it in the basket''
\end{itemize}


\subsection{Task-4 Benchmark}\label{para:task4}
\begin{itemize}
    \item \textit{Task 0}: ``put the wine bottle on the rack'', ''put the cream cheese in the bowl''
    \item \textit{Task 1}: ``push the plate to the front of the stove'', ''put the bowl on the stove.''
    \item \textit{Task 2}: ``put the wine bottle on top of the cabinet'', ''open the top drawer and put the bowl inside.''
\end{itemize}

\subsection{Baselines}
\textbf{1)} Sequential Learning (SL)~\citep{liu2023libero}: This is the most straightforward continual learning method, where the VLA model is simply fine-tuned on new tasks without any specific mechanisms to prevent forgetting.
\textbf{2)} Multi-Task Learing (MTL)~\citep{liu2023libero}: This baseline algorithm learns from both new and old tasks simultaneously. 
\textbf{3)} Learning Without Forgetting (LWF)~\citep{li2017learning}: This algorithm preserves existing knowledge without requiring the old data by having the new model \textit{imitate} the old model's predictions on new data during training (knowledge distillation).
\textbf{4)} Experience Replay (ER)~\citep{lopez2017gradient}: It is a classic example of combining reinforcement learning and supervised learning, explicitly using episode memory to store samples from past tasks and converting them into gradient constraints to prevent the gradients of new tasks from degrading the performance on old tasks.

\begin{table*}[htbp]
    \centering
    \caption{Training hyperparameter configuration.}
    \begin{tabular*}{0.65\textwidth}{c @{\extracolsep{\fill}} c}
        \toprule
        \textbf{Parameter} & \textbf{Value} \\
        \midrule
        \multicolumn{2}{c}{\textit{General}} \\
        \midrule
        LoRA Rank & $32$ \\
        Gradient Accumulation Steps & $1$ \\
        PPO Epochs & $1$ \\
        PPO Clip Range & $0.2$ \\
        PPO Clip High & $0.2$ \\
        Max Step Batch Size & $2$ \\
        Learning Rate (LoRA modules) & $1.0 \times 10^{-4}$ \\
        Learning Rate (Action head) & $5.0 \times 10^{-5}$ \\
        Weight Decay & $1.0 \times 10^{-4}$ \\
        Gradient Clip Norm (model) & $1.0$ \\
        Gradient Clip Norm (header) & $1.0$ \\
        Total Steps $M$  & $12$ \\
        Eval Interval $I_{interval}$ & $1$ \\
        Update Times $N$ & $10$ \\
        \midrule
        \multicolumn{2}{c}{\textit{CRL-VLA Weight}} \\
        \midrule
        Q-Network Constraint ($\mathcal{L}^{Q}_{\mathrm{GCV}}$) & $0.01$ \\
        Value-Head Constraint ($\mathcal{L}^{V}_{\mathrm{GCV}}$) & $0.01$ \\
        KL Divergence Scale ($\lambda_{\text{KL}}$) & $1.0 \times 10^{-6}$ \\
    Q Loss Coef ($\mathcal{L}^{Q}_{\mathrm{MC}}$) & $1.0$ \\
        Value Loss Coef ($\mathcal{L}^{V}_{\mathrm{MC}}$) & $1.0$ \\
        
        Global Context Vector (GCV) & Enabled \\
        \bottomrule
    \end{tabular*}
    \label{tab_app:hyperparameter}
\end{table*}

\end{document}